\definecolor{darkred}{rgb}{0.55,0.0,0.0}
\definecolor{linkcolor}{HTML}{6B767D}
\colorlet{darkblue}{blue!50!black}  
\colorlet{lightblue}{blue!60}
\definecolor{darkgreen}{rgb}{0.0,0.6,0.0}
  \newcommand{\internalComment}[1]{\textbf{\color{red}[}{\footnotesize#1}\textbf{\color{red}]}}
  \newcommand{\internalComment}[1]{}
\DeclareExpandableDocumentCommand{\IfNotEmptyT}{mm}%
{%
  \tl_if_empty:nTF{#1}{}{#2}%
}
\newcounter{theoremcnt}[section]
\declaretheoremstyle[
headfont=\bfseries,
spaceabove=\topsep,
spacebelow=\topsep,
bodyfont=\slshape,
]{plain}
\declaretheorem[
style=plain,
sibling=theoremcnt,
name=Theorem,
Refname={Theorem,Theorems},
]{theorem}
\declaretheorem[
style=plain,
sibling=theoremcnt,
name=Assumption,
Refname={Assumption,Assumptions},
]{assumption}
\declaretheorem[
style=plain,
sibling=theoremcnt,
name=Lemma,
Refname={Lemma,Lemmas},
]{lemma}
\declaretheorem[
style=plain,
sibling=theoremcnt,
name=Corollary,
Refname={Corollary,Corollaries},
]{corollary}
\declaretheorem[
style=remark,
sibling=theoremcnt,
name=Example,
Refname={Example,Examples},
]{example}
\newcommand{\bm}{\boldsymbol{m}}
\newcommand{\bu}{\boldsymbol{u}}
\newcommand{\bv}{\boldsymbol{v}}
\newcommand{\bx}{\boldsymbol{x}}
\newcommand{\bzeta}{\boldsymbol{\zeta}}
\newcommand{\btheta}{\boldsymbol{\theta}}
\definecolor{mplblue}{HTML}{1f77b4}
\definecolor{mplorange}{HTML}{ff7f0e}
\definecolor{mplgreen}{HTML}{2ca02c}
\definecolor{mplred}{HTML}{d62728}
\definecolor{mplpurple}{HTML}{9467bd}
\definecolor{mplbrown}{HTML}{8c564b}
\definecolor{mplpink}{HTML}{e377c2}
\definecolor{mplgray}{HTML}{7f7f7f}
\definecolor{mplolive}{HTML}{bcbd22}
\definecolor{mplcyan}{HTML}{17becf}
\newcommand\MTkillspecial[1]{
  \bgroup
  \catcode`\&=9
  \let\\\relax%
  \scantokens{#1}%
  \egroup
}
\newcommand{\DeclareCustomDelim}[3]{
  \DeclarePairedDelimiter{#1}{#2}{#3}
  \reDeclarePairedDelimiterInnerWrapper{#1}{star}{
    \mathopen{##1\vphantom{\MTkillspecial{##2}}\kern-\nulldelimiterspace\right.}
  ##2
  \mathclose{\left.\kern-\nulldelimiterspace\vphantom{\MTkillspecial{##2}}##3}
  }
}
\DeclareCustomDelim{\prn}{\lparen}{\rparen}
\DeclareCustomDelim{\crl}{\{}{\}}
\DeclareCustomDelim{\brk}{[}{]}
\DeclareCustomDelim{\norm}{\|}{\|}
\DeclareCustomDelim{\abs}{|}{|}
\DeclarePairedDelimiter{\floor}{\lfloor}{\rfloor}
\DeclarePairedDelimiterXPP\Prob[1]{\Problet}\{\}{}{
\DeclarePairedDelimiterXPP\Expect[1]{\Expectlet}[]{}{
  
  #1}
\newcounter{notationcnt}
\newcommand\newtarget[1]{\refstepcounter{notationcnt}\label{#1}}
\newcommand{\protectedLink}[2]{
  {
    \hypersetup{hidelinks}
    \protect\hyperref[#1]{#2}
  }
}
\newcommand*\makeAlph[1]{\symbol{\numexpr96+#1}}
\NewDocumentCommand{\labrel}{m o }{%
  \IfNoValueTF{#2}{(\makeAlph{#1})}{\stackrel{\mathrm{(\makeAlph{#1})}}{#2}}%
}
\renewcommand{\geq}{\geqslant}
\renewcommand{\leq}{\leqslant}
\newcommand{\subarrsymbol}{:}
\newcommand{\subarr}{\nolinebreak\mathinner{\subarrsymbol}\nolinebreak}
\NewDocumentCommand{\range}{omm}{\brk[#1]{#2 \subarr #3}}
\newcommand{\Problet}{\ensuremath\mathbb{P}}
\newcommand{\Expectlet}{\ensuremath\mathbb{E}}
\DeclareMathOperator*{\diag}{diag}
\DeclareMathOperator{\sign}{sign}
\newcommand{\trans}{^{\mkern-1.5mu\mathsf{T}}}  
\NewDocumentCommand{\extmomfuncNoLink}{}{\boldsymbol{\Phi}}
\NewDocumentCommand{\extmomfunc}{}{\protectedLink{def:extmomfunc}{\extmomfuncNoLink}}
\NewDocumentCommand{\extmomfuncsc}{}{\protectedLink{def:extmomfunc}{\Phi}}
\NewDocumentCommand{\bcorNoLink}{m m}{b_{#1}^{(#2 + 1)}}
\NewDocumentCommand{\bcor}{m m}{\protectedLink{def:bcor}{\bcorNoLink{#1}{#2}}}
\NewDocumentCommand{\mcorscNoLink}{m m m}{m_{#1; #2}^{(#3)}}
\NewDocumentCommand{\mcorNoLink}{m m}{\bm_{#1}^{(#2)}}
\NewDocumentCommand{\mcor}{m m}{\protectedLink{eq:cor-general-momentum-methods-defs}{\mcorNoLink{#1}{#2}}}
\NewDocumentCommand{\mcorsc}{m m m}{\protectedLink{eq:cor-general-momentum-methods-defs}{\mcorscNoLink{#1}{#2}{#3}}}
\NewDocumentCommand{\bcorc}{m}{\protectedLink{def:bcor}{b_{#1}}}
\NewDocumentCommand{\momfuncNoLink}{}{\boldsymbol{g}}
\NewDocumentCommand{\momfunc}{}{\protectedLink{def:momfunc}{\momfuncNoLink}}
\NewDocumentCommand{\momfuncscNoLink}{}{g}
\NewDocumentCommand{\momfuncsc}{}{\protectedLink{def:momfunc}{\momfuncscNoLink}}
\NewDocumentCommand{\gbar}{m m}{\bar{\boldsymbol{g}}_{#1}\IfNotEmptyT{#2}{^{(#2)}}}%
\NewDocumentCommand{\numofmomsNoLink}{}{L}
\NewDocumentCommand{\numofmomsDef}{}{\newtarget{def:numofmoms}{\numofmomsNoLink}}
\NewDocumentCommand{\numofmoms}{}{\protectedLink{def:numofmoms}{\numofmomsNoLink}}
\NewDocumentCommand{\lossNoLink}{}{\mathcal{L}}
\NewDocumentCommand{\lossDef}{}{\newtarget{def:loss}{\lossNoLink}}
\NewDocumentCommand{\loss}{}{\protectedLink{def:loss}{\lossNoLink}}
\NewDocumentCommand{\lrNoLink}{}{h}
\NewDocumentCommand{\lrDef}{}{\newtarget{def:lr}{\lrNoLink}}
\NewDocumentCommand{\lr}{}{\protectedLink{def:lr}{\lrNoLink}}
\NewDocumentCommand{\corr}{m}{\boldsymbol{M}^{(#1)}}
\NewDocumentCommand{\corrsc}{m m}{M_{#1}^{(#2)}}
\newcommand\numberthis{\addtocounter{equation}{1}\tag{\theequation}}
\newcounter{ccnt}
\NewDocumentCommand{\nc}{m}{
  \refstepcounter{ccnt}\ensuremath{c_{\theccnt}}\IfValueT{#1}{\label{#1}}%
}
\newcounter{bigccnt}
\NewDocumentCommand{\nC}{m}{%
  \refstepcounter{bigccnt}\ensuremath{C_{\thebigccnt}}\IfValueT{#1}{\label{#1}}%
}
\NewDocumentCommand{\oC}{m}{
  \ensuremath{C_{\ref{#1}}}
}
\newcommand{\papertitle}{
  How Memory in Optimization Algorithms\ifthenelse{\boolean{useneurips}}{}{\\}
  Implicitly Modifies the Loss}
\title{\papertitle\internalComment{INTERNAL}}
\author{
  Matias D. Cattaneo\thanks{Authors are listed alphabetically by last name.} \\
  Princeton University\\
  \texttt{cattaneo@princeton.edu} \\
  \AND
  Boris Shigida\footnotemark[1] \\
  Princeton University\\
  \texttt{bs1624@princeton.edu} \\
}
\NewDocumentCommand{\py}{+v}{}
\newsavebox{\pycodebox}      
\NewDocumentEnvironment{pycode}{}{%
  \VerbatimEnvironment%
  \begin{lrbox}{\pycodebox}%
  \begin{minipage}{\linewidth}%
  \begin{Verbatim}%
}{
  \end{Verbatim}%
  \end{minipage}%
  \end{lrbox}%
}
\NewDocumentEnvironment{pycontext}{} {
  \VerbatimEnvironment
  \begin{lrbox}{\pycodebox}%
  \begin{minipage}{\linewidth}%
  \begin{Verbatim}
}{
  \end{Verbatim}
  \end{minipage}
  \end{lrbox}
}
\begin{document}

\begin{pycontext}
import os
import sys
os.chdir("../../adam_hyper")
sys.path.insert(0, os.getcwd())
from report import *

def incl(*args, language="python", **kwargs):
    from sphinx.directives.code import LiteralIncludeReader
    from sphinx.config import Config

    reader = LiteralIncludeReader('report.py', options=dict(pyobject='fig_adam_cifar10_spiky_vs_smooth_train_acc'), config=Config())
    text, line_num = reader.read()
    print(f"\\begin{{minted}}{{{language}}}")
    print(text)
    print(f"\\end{{minted}}")
\end{pycontext}

\maketitle

\begin{abstract}
    In modern optimization methods used in deep learning, each update depends on the history of previous iterations, often referred to as \textit{memory}, and this dependence decays fast as the iterates go further into the past. For example, gradient descent with momentum has exponentially decaying memory through exponentially averaged past gradients. We introduce a general technique for identifying a memoryless algorithm that approximates an optimization algorithm with memory. It is obtained by replacing all past iterates in the update by the current one, and then adding a correction term arising from memory (also a function of the current iterate). This correction term can be interpreted as a perturbation of the loss, and the nature of this perturbation can inform how memory implicitly (anti-)regularizes the optimization dynamics. As an application of our theory, we find that Lion does not have the kind of implicit anti-regularization induced by memory that AdamW does, providing a theory-based explanation for Lion's better generalization performance recently documented \citep{chen2023symbolic}. Empirical evaluations confirm our theoretical findings.
\end{abstract}

\section{Introduction}

Many optimization methods used in deep learning are first-order methods with exponentially decaying memory. For example, adding ``momentum'' to gradient descent (GD) is a well-established practice to make training smoother and convergence faster (e.\,g. \citet{krizhevsky2012imagenet}). Adaptive methods such as Adam \citep{kingma2017adammethodstochasticoptimization}, RMSProp \citep{tieleman2012lecture}, AdamW \citep{loshchilov2019decoupled}, and AdaFactor \citep{shazeer2018adafactor}, which are commonly used to train large language models \citep{dubey2024llama,liu2024deepseek,chowdhery2023palm}, all have exponentially decaying memory. Despite the popularity of such optimization methods, there is little theoretical knowledge about the implicit bias memory introduces to them (potentially informing what regions of the loss space the method takes the iterates to, what minima they converge to, how such minima influence the generalization of the trained model, and so on). In this article, we introduce a general framework for identifying such biases.

We study a general class of optimization algorithms described by the following iteration
\begin{equation}\label{eq:general-iteration}
{\color{darkred}\btheta^{(n + 1)} = \btheta^{(n)} - \lr \boldsymbol{F}^{(n)}(\btheta^{(n)}, \ldots, \btheta^{(0)}),}
\end{equation}
where $\btheta^{(n)} \in \mathbb{R}^d$ are the (evolving) parameters of the machine learning model, $\btheta^{(0)}$ is some initial condition, $\lr$ is the step size or learning rate, and the functions $\boldsymbol{F}^{(n)}$ map from (some subset of) $(\mathbb{R}^d)^{n + 1}$ to $\mathbb{R}^d$ and are allowed to be different at each iteration. The right-hand side in \cref{eq:general-iteration} depends on the whole history of previous iterates, which means the algorithm has memory.

For many algorithms used in practice, dependence on the history comes in one specific form: by using what we call ``\textit{momentum variables}'', that is, exponential averages of some functions of the iterate $\btheta^{(n)}$ (usually, more specifically, functions of the loss gradient). We present five leading examples to illustrate this point.

\begin{example}[Heavy-ball momentum gradient descent; \citet{polyak1964some}]\label{ex:heavy-ball}
    This optimizer can be written in the form~\eqref{eq:general-iteration} with
    \begin{align*}
        \boldsymbol{F}^{(n)}(\btheta^{(n)}, \ldots, \btheta^{(0)}) &= \mcor{1}{n + 1},\\
        \text{where}\quad \mcor{1}{n + 1} &= \sum_{k = 0}^n \beta^{n - k} \nabla \loss(\btheta^{(k)})\numberthis\label{eq:sgd-momentum},
    \end{align*}
    for some initial condition $\btheta^{(0)}$, and where $\beta \in [0, 1)$ is the momentum parameter, $\loss$ is the loss function to be optimized, and $\nabla \loss$ is its gradient.
    \hfill$\qed$
\end{example}

The optimizer in \cref{ex:heavy-ball} is often just referred to as GD with momentum, where the exponential sum $\mcor{1}{n + 1}$ in \cref{eq:sgd-momentum} is the ``momentum variable'': it exponentially averages past gradients.  The aforementioned optimizer is well-known and often used for training recurrent neural networks and convolutional neural networks, but it underperforms adaptive optimizers when training other architectures such as transformers \citep{zhang2020adaptive,liu2020understanding,anil2019memory,kunstner2023noise}. The following modification is also commonly used (this formulation is taken from \citet{choi2020empiricalcomparisonsoptimizersdeep} and matches the standard PyTorch implementation).

\begin{example}[Nesterov's accelerated gradient descent; \citet{nesterov1983method}]\label{ex:nesterov}
    This optimizer can be written in the form~\eqref{eq:general-iteration} with
    \begin{align*}
        \boldsymbol{F}^{(n)}(\btheta^{(n)}, \ldots, \btheta^{(0)}) &= \mcor{1}{n + 1} + \mcor{2}{n + 1},\\
        \text{where}\quad \mcor{1}{n + 1} &= \beta \sum_{k = 0}^n \beta^{n - k} \nabla \loss(\btheta^{(k)}),\\
        \mcor{2}{n + 1} &= \nabla \loss(\btheta^{(n)}),
    \end{align*}
    for some initial condition $\btheta^{(0)}$, and where $\beta \in [0, 1)$ is the momentum parameter, $\loss$ is the loss function to be optimized, and $\nabla \loss$ is its gradient.
    \hfill$\qed$
\end{example}

The next example presents the most prominent adaptive optimizer, nowadays commonly used for training large language models \citep{dubey2024llama,liu2024deepseek}.

\begin{example}[AdamW; \citet{loshchilov2019decoupled}]\label{ex:adamw}
    The optimizer can be written in the form~\eqref{eq:general-iteration} with
    \begin{align*}
      \boldsymbol{F}^{(n)}(\btheta^{(n)}, \ldots, \btheta^{(0)}) &= \frac{\mcor{1}{n + 1}}{\sqrt{\mcor{2}{n + 1} + \varepsilon}} + \mcor{3}{n + 1},\\
      \text{where}\quad \mcor{1}{n + 1} &= \frac{1 - \beta_1}{1 - \beta_1^{n + 1}} \sum_{k = 0}^n \beta_1^{n - k} \nabla \loss(\btheta^{(k)}),\\
      \mcor{2}{n + 1} &= \frac{1 - \beta_2}{1 - \beta_2^{n + 1}} \sum_{k = 0}^n \beta_2^{n - k} \prn[\big]{{\nabla \loss(\btheta^{(k)})}}^2,\\
      \mcor{3}{n + 1} &= \lambda \btheta^{(n)},
    \end{align*}
    for some initial condition $\btheta^{(0)}$, and where $0 \leq \beta_1, \beta_2 < 1$ are momentum parameters, $\varepsilon > 0$ is a numerical stability parameter, $0 < \lambda < 1$ is a weight decay parameter, and the squares and square roots are taken component-wise.
    \hfill$\qed$
\end{example}

In \cref{ex:adamw}, $\mcor{1}{n + 1}$ and $\mcor{2}{n + 1}$ are also ``momentum variables'': exponentially averaged gradients and exponentially averaged squared gradient components respectively, with coefficients in front of the sum, such as $(1 - \beta_1) (1 - \beta_1^{n + 1})^{-1}$, providing ``bias correction'' \citep{kingma2017adammethodstochasticoptimization}. The variable $\mcor{3}{n + 1}$ here is a degenerate ``momentum variable'', with memory decaying infinitely fast.

The following modification incorporates Nesterov's momentum into AdamW. This formulation is taken from \citet{choi2020empiricalcomparisonsoptimizersdeep} (except here $\varepsilon$ is inside the square root in the denominator).
\begin{example}[NAdam with decoupled weight decay; \citet{dozat2016incorporating}]\label{ex:nadamw}
    The optimizer can be written in the form~\eqref{eq:general-iteration} with
    \begin{align*}
      \boldsymbol{F}^{(n)}(\btheta^{(n)}, \ldots, \btheta^{(0)}) &= \frac{\beta_1 \mcor{1}{n + 1} + (1 - \beta_1) \mcor{4}{n + 1}}{\sqrt{\mcor{2}{n + 1} + \varepsilon}} + \mcor{3}{n + 1},\\
      \text{where}\quad \mcor{1}{n + 1} &= \frac{1 - \beta_1}{1 - \beta_1^{n + 1}} \sum_{k = 0}^n \beta_1^{n - k} \nabla \loss(\btheta^{(k)}),\\
      \mcor{2}{n + 1} &= \frac{1 - \beta_2}{1 - \beta_2^{n + 1}} \sum_{k = 0}^n \beta_2^{n - k} \prn[\big]{{\nabla \loss(\btheta^{(k)})}}^2,\\
      \mcor{3}{n + 1} &= \lambda \btheta^{(n)},\\
      \mcor{4}{n + 1} &= \nabla \loss(\btheta^{(n)}),
    \end{align*}
    for some initial condition $\btheta^{(0)}$, and where $0 \leq \beta_1, \beta_2 < 1$ are momentum parameters, $\varepsilon > 0$ is a numerical stability parameter, $0 < \lambda < 1$ is a weight decay parameter, and the squares and square roots are taken component-wise.
    \hfill$\qed$
\end{example}

As a final example, consider a new optimizer called Lion (Evo\textbf{L}ved S\textbf{i}gn M\textbf{o}me\textbf{n}tum), which was recently discovered by an evolutionary search, and then verified to generalize better than AdamW on a variety of tasks \citep{chen2023symbolic}. We consider a generalized version of the Lion algorithm.

\begin{example}[Lion-$\mathcal{K}$; \citet{chen2024lion}]\label{ex:lion-K}
    The optimizer can be written in the form of~\eqref{eq:general-iteration} with
    \begin{align*}
      \boldsymbol{F}^{(n)}(\btheta^{(n)}, \ldots, \btheta^{(0)}) &= - \nabla \mathcal{K}(\mcor{1}{n + 1} + \mcor{2}{n + 1}) + \mcor{3}{n + 1},\\
      \text{where}\quad \mcor{1}{n + 1} &= - (1 - \rho_2) \frac{\rho_1}{\rho_2} \sum_{k = 0}^n \rho_2^{n - k} \nabla \loss(\btheta^{(k)}),\\
      \mcor{2}{n + 1} &= - \prn[\bigg]{1 - \frac{\rho_1}{\rho_2}} \nabla \loss(\btheta^{(n)}),\\
      \mcor{3}{n + 1} &= \lambda \btheta^{(n)},\numberthis\label{eq:lion-k}
    \end{align*}
    for some initial condition $\btheta^{(0)}$, and where $0 \leq \rho_1, \rho_2 < 1$ are Lion's momentum parameters, $\lambda > 0$ is a weight decay parameter, $\mathcal{K}\colon \mathbb{R}^d \to \mathbb{R}$ is some convex function, and $\nabla \mathcal{K}$ is its subgradient.
    \internalComment{To get their equation (24), multiply $\mcor{1}{n + 1}$ and $\mcor{2}{n + 1}$ by $\lr \alpha / (1 - \rho_2)$.}
    \hfill$\qed$
\end{example}

We choose the letter $\rho$ rather than $\beta$ for Lion's momentum parameters because they are not precisely parameters controlling the speed of exponential decay in ``momentum variables'', as explained in \cref{sec:fn-func-of-mom-vars}. Ordinary Lion corresponds to $\mathcal{K}(\bx) = \norm{\bx}_1$ and $\nabla \mathcal{K}(\bx) = \sign(\bx)$ in \cref{ex:lion-K}, where the $\sign$ function is understood component-wise. We consider the generalized Lion-$\mathcal{K}$ algorithm because it covers a few known algorithms as special cases: see Table~1 and Section~3.1 in \citet{chen2024lion}. In fact, it also includes \cref{ex:heavy-ball} as a special case by taking $\mathcal{K}(\bx) = \norm{\bx}^2 / 2$, $\rho_1 = \rho_2$, and $\lambda = 0$, but we will deal with that important specific example separately for clarity.

It is reasonable to expect that adding exponentially decaying memory to an algorithm in such a way as described above (for example, replacing the gradient with exponentially averaged past gradients) changes the optimization dynamics, thereby affecting the performance of the trained model. The technique we introduce identifies \textit{how} the iteration evolution changes when memory is added. This technique starts with an iteration having memory, and replaces it by a memoryless iteration that approximates the original one, provided a correction term is added. Specifically, we start with algorithm~\eqref{eq:general-iteration}, and then construct a corresponding new memoryless iteration:
\begin{equation}\label{eq:memoryless-iteration}
  \begin{tikzpicture}[remember picture,baseline=(eq2.base)]
    \node (eq1) at (0,0) {$
      {\color{darkred} \btheta^{(n + 1)} =} \underbracket{\color{darkred} \btheta^{(n)} - \lr \boldsymbol{F}^{(n)}(\btheta^{(n)}, \ldots, \btheta^{(0)})}_{\text{depends on the whole history $\btheta^{(n)}, \ldots, \btheta^{(0)}$}}
      $};

    \node (eq2) at (6,-1.3) {$
      \tilde{\btheta}^{(n + 1)} = \underbracket{\tilde{\btheta}^{(n)} - \lr \brk[\big]{\boldsymbol{F}^{(n)}(\tilde{\btheta}^{(n)}) + {\color{darkblue}\overbracket{\corr{n}(\tilde{\btheta}^{(n)})}^{\text{correction}}}}}_{\text{only depends on $\tilde{\btheta}^{(n)}$ (no memory)}},
      $};

    \draw[->, very thick, >=Latex,
    shorten >=3pt, shorten <=3pt]
    ($(eq1.east) + (0,0.)$)
    .. controls +(1.8,-0.90)
    and   +(-1.8,0.90)
    .. ($(eq2.north) + (0.40,-0.30)$);
  \end{tikzpicture}
\end{equation}
where we slightly abuse notation and put
\begin{equation*}
  \boldsymbol{F}^{(n)}(\tilde{\btheta}^{(n)}) \equiv \boldsymbol{F}^{(n)}(\underbrace{\tilde{\btheta}^{(n)}, \ldots, \tilde{\btheta}^{(n)}}_{\text{$n + 1$ times}}),
\end{equation*}
and where the function ${\color{darkblue} \corr{n}(\btheta)}$ captures a correction due to the presence of memory.
We then prove an explicit bound on the approximation error $\norm{\btheta^{(n)} - \tilde{\btheta}^{(n)}}$, as a function of the learning rate $\lr$. Interpreting the correction term can sometimes generate predictions on whether memory helps or hurts generalization of first-order methods with momentum.

Our theory only relies on memory decaying sufficiently fast, not necessarily in the form of momentum variables, and thus covers all the examples listed above and many others, while also allowing for both full-batch and mini-batch training.
\Cref{sec:developing-intuition} first presents a heuristic discussion of our proposed
technique focusing on the simplest possible case for clarity: GD with momentum (\cref{ex:heavy-ball}).
Then, \cref{sec:identifying-the-effect-of-memory} presents our main theoretical contribution, which we specialize and apply to all the listed examples in \cref{sec:fn-func-of-mom-vars,sec:correction-terms-for-all-examples}.

Depending on specific optimization algorithm considered, our general result can lead to different practical conclusions. As a substantive application, \cref{sec:adamw-anti-reg-but-lion-not} studies AdamW (\cref{ex:adamw}) and Lion-$\mathcal{K}$ (\cref{ex:lion-K}), and demonstrates that Lion does not suffer from the anti-regularization effect that AdamW's memory has, which predicts better generalization of (full-batch) Lion.
Section \ref{sec:further-implications} (with details in \cref{sec:mini-batch-training-details}) discusses further implications of our main theoretical result: constructing modified equations, and identifying implicit regularization by noise in mini-batch training. \Cref{sec:limitations} is devoted to limitations and future directions.

Due to compute constraints, we consider a large-scale empirical testing of this theory out of scope of this paper. However, we provide a preliminary empirical illustration in \cref{sec:empirical-evaluations} (with some details, compute resources and licenses in \cref{sec:exp-details-and-licenses}).

\subsection{Notation}

We use standard notations for the $\ell_p$ norm of a vector $\norm{\bv}_p = \prn[\big]{\sum_i \abs{v_i}^p}^{1 / p}$; the infinity-norm is defined as $\norm{\bv}_\infty = \max_i \abs{v_i}$; finally, the norm without indices is by default Euclidean: $\norm{\bv} \equiv \norm{\bv}_2$. When we write $\bu_{n, \lr} = O(g(\lr))$, where $g(\lr)$ is some fixed function of $\lr$ and $\bu_{n, \lr}$ is some sequence of vectors possibly depending on $\lr$, we mean that there is a constant $C$ not depending on $\lr$ or $n$ such that $\norm{\bu_{n, \lr}} \leq C g(\lr)$.
We will contract repeating arguments when convenient, e.\,g. instead of $\boldsymbol{F}^{(n)}(\btheta, \ldots, \btheta)$ we will write just $\boldsymbol{F}^{(n)}(\btheta)$. We will use notation $\lossDef(\cdot)$ for the loss and $\nabla \loss(\cdot)$ for its gradient, $\lrDef$ for the learning rate.

\section{Building Intuition: Memory Regularizes GD with Momentum}\label{sec:developing-intuition}

We provide a heuristic explanation of our technique, considering the simplest algorithm with exponentially decaying memory: heavy-ball momentum GD (\cref{ex:heavy-ball}). As explained above, we would like to remove the dependence of the right-hand side in
\begin{equation}\label{eq:heavy-ball-in-one-eq}
\btheta^{(n + 1)} = \btheta^{(n)} - \lr \sum_{k = 0}^n \beta^{n - k} \nabla \loss(\btheta^{(k)})
\end{equation}
on the ``past'' iterates $\btheta^{(n - 1)}, \ldots, \btheta^{(0)}$, leaving only the dependence on the ``current'' iterate $\btheta^{(n)}$. Let us represent ``past'' iterates through the ``current'' one. First, write
\begin{equation*}
  \btheta^{(n - 1)}
  = \btheta^{(n)} + \lr \sum_{b = 0}^{n - 1} \beta^{b} \nabla \loss(\btheta^{(n - 1 - b)}) = \btheta^{(n)} + \lr \sum_{b = 0}^{n - 1} \beta^{b} \nabla \loss(\btheta^{(n)}) + O(\lr^2),
\end{equation*}
where the second equality relies on exponential averaging to replace historical iterates with $\btheta^{(n)}$, influencing only higher-order terms. Similarly,
\begin{align*}
  \btheta^{(n - 2)} &= \btheta^{(n - 1)} + \lr \sum_{b = 0}^{n - 2} \beta^{b} \nabla \loss(\btheta^{(n - 2 - b)})\\
                    &= \btheta^{(n - 1)} + \lr \sum_{b = 0}^{n - 2} \beta^{b} \nabla \loss(\btheta^{(n)}) + O(\lr^2),\\
                    &= \btheta^{(n)} + \lr \crl[\bigg]{\sum_{b = 0}^{n - 1} \beta^{b} + \sum_{b = 0}^{n - 2} \beta^{b}} \nabla \loss(\btheta^{(n)}) + O(\lr^2),
\end{align*}
where the last equality follows by inserting the expression for $\btheta^{(n - 1)}$. Continue like this up to
\begin{align*}
\btheta^{(n - k)} = \btheta^{(n)} + \lr \sum_{l = 1}^k \sum_{b = 0}^{n - l} \beta^{b} \nabla \loss(\btheta^{(n)}) + O(k^2 \lr^2),
\end{align*}
where the $k^2$ provides an estimate on the accumulation of error terms of order $\lr^2$.

We have now represented all the historical iterates through the current one. Combining it with Taylor expansion around $\btheta^{(n)}$ in \cref{eq:heavy-ball-in-one-eq}, we obtain
\begin{align*}
  \btheta^{(n + 1)}
  &= \btheta^{(n)} - \lr \sum_{k = 0}^n \beta^k \crl[\bigg]{\nabla \loss(\btheta^{(n)}) + \lr \nabla^2 \loss(\btheta^{(n)}) \sum_{l = 1}^k \sum_{b = 0}^{n - l} \beta^{b} \nabla \loss(\btheta^{(n)})  + O(k^2 \lr^2)}\\
        &= \btheta^{(n)} - \lr \frac{1 + o_n(1)}{1 - \beta} \nabla \loss(\btheta^{(n)}) - \lr^2 \frac{\beta \brk{1 + o_n(1)}}{(1 - \beta)^3} \nabla^2 \loss(\btheta^{(n)}) \nabla \loss(\btheta^{(n)}) + O(\lr^3),
\end{align*}
where $o_n(1)$ terms go to zero exponentially fast as $n \to \infty$.
Now using $\nabla^2 \loss(\btheta) \nabla \loss(\btheta) = (1 / 2) \nabla \norm{\nabla \loss(\btheta)}^2$, we obtain that heavy-ball momentum GD is close to ordinary GD (no momentum) with a different step size and different loss, given by
\begin{equation}\label{eq:hb-mod-loss}
  \btheta^{(n + 1)} = \btheta^{(n)} - \frac{\lr}{1 - \beta} \nabla \tilde{\loss}(\btheta),\quad\text{where}\quad \tilde{\loss}(\btheta) = \loss(\btheta) + \frac{\lr \beta}{2 (1 - \beta)^2} \norm{\nabla \loss(\btheta)}^2.
\end{equation}
The term $\frac{\lr \beta}{2 (1 - \beta)^2} \norm{\nabla \loss(\btheta)}^2$ that is added implicitly to the loss by the momentum can be interpreted as implicit regularization. Since $\beta$ is usually taken to be close to one, the term strongly penalizes the squared norm of the gradient. There is empirical evidence that such penalization improves generalization \citep{barrett2021implicit,smith2021on,ghosh2023implicit}.
In fact, this term (up to coefficients) can be interpreted as a first-order approximation of $\ell_2$ sharpness \citep{ghosh2023implicit}, which suggests that it moves the trajectory towards ``flatter'' minima; this is often thought to improve generalization \citep{foret2021sharpnessaware}.

A much more fine-grained analysis of this algorithm is provided in \cite{cattaneo2025modified}
where, in particular, the modified loss is described to arbitrary precision rather than just to first order in $h$.

\section{General Theory: The Effect of Memory}\label{sec:identifying-the-effect-of-memory}


The general form of an optimization algorithm with memory is given by \cref{eq:general-iteration}. The only property of memory we use is that it (uniformly in $n$) decays exponentially fast, as made precise by \cref{ass:form-of-fn} below. Openness and convexity of the domain of optimization $\mathcal{D}$, that is, where all $\crl{\btheta^{(n)}}$ will be assumed to lie, are innocuous assumptions (e.g., $\mathbb{R}^d$ is open and convex); we impose them to avoid technicalities with differentiation and Taylor expansion.

\begin{assumption}[Memory Decay]\label{ass:form-of-fn}
  Let $\mathcal{D}$ be an open convex domain in $\mathbb{R}^d$. Let $\crl{\boldsymbol{F}^{(n)}(\btheta^{(n)}, \ldots, \btheta^{(0)})}_{n = 0}^{\infty}$ be a family of functions $\mathcal{D}^{n + 1} \to \mathbb{R}^d$, two times continuously differentiable on their respective domains, such that for any $n \in \mathbb{Z}_{\geq 0}$, $k_1, k_2 \in \crl{0, \ldots, n}$, $r, i, j \in \crl{1, \ldots, d}$,
  \begin{equation*}
    \abs{F^{(n)}_r} \leq \gamma_{-1},
    \qquad \abs[\bigg]{\frac{\partial F_r^{(n)}}{\partial \theta_i^{(n - k_1)}}} \leq \gamma_{k_1},
    \qquad \abs[\bigg]{\frac{\partial^2 F_r^{(n)}}{\partial \theta_i^{(n - k_1)} \partial \theta_j^{(n - k_2)}}} \leq \gamma_{k_1, k_2},
  \end{equation*}
  where $\boldsymbol{F}^{(n)}=(F_1^{(n)}, \ldots,F_d^{(n)})\trans$, and $\gamma_{-1}$, $\gamma_{k_1}$ and $\gamma_{k_1, k_2}$ are families of positive reals (not depending on $n$) satisfying
  \begin{equation}\label{eq:summability}
    \sum_{k_1 = 1}^{\infty} \gamma_{k_1} k_1^2 + \sum_{k_1, k_2 = 1}^{\infty} \gamma_{k_1, k_2} k_1 k_2 < \infty.
  \end{equation}
\end{assumption}

\subsection{Deriving the Memoryless Approximation}\label{sec:deriving-memoryless-approximation}

By Taylor expansion with the Lagrange remainder,
\begin{align*}
  &F_r^{(n)}(\btheta^{(n)}, \ldots, \btheta^{(0)}) - F_r^{(n)}(\btheta^{(n)}, \ldots, \btheta^{(n)})\\
  &\quad = \sum_{k = 1}^n \prn[\big]{\btheta^{(n - k)} - \btheta^{(n)}}\trans \frac{\partial F_r^{(n)}}{\partial \btheta^{(n - k)}}(\btheta^{(n)}, \ldots, \btheta^{(n)})\\
  &\qquad + \frac{1}{2} \sum_{k_1, k_2 = 1}^n \prn[\big]{\btheta^{(n - k_1)} - \btheta^{(n)}}\trans \frac{\partial^2 F_r^{(n)}}{\partial \btheta^{(n - k_1)} \partial \btheta^{(n - k_2)}}(\bzeta) \prn[\big]{\btheta^{(n - k_2)} - \btheta^{(n)}}\\
  &\quad = \sum_{k = 1}^n \prn[\big]{\btheta^{(n - k)} - \btheta^{(n)}}\trans \frac{\partial F_r^{(n)}}{\partial \btheta^{(n - k)}}(\btheta^{(n)}, \ldots, \btheta^{(n)}) + O(\lr^2),\numberthis\label{eq:llfjh}
\end{align*}
where $\bzeta$ is some point on the segment between $\prn[\big]{\btheta^{(n)}, \ldots, \btheta^{(0)}}$ and $\prn[\big]{\btheta^{(n)}, \ldots, \btheta^{(n)}}$; in the last step we used \cref{ass:form-of-fn}, $\btheta^{(n - k_1)} - \btheta^{(n)} = O(k_1 \lr)$, and $\btheta^{(n - k_2)} - \btheta^{(n)} = O(k_2 \lr)$.

Next, write
\begin{align*}
  &\btheta^{(n - k)} - \btheta^{(n)} = \sum_{s = n - k}^{n - 1} \prn[\big]{\btheta^{(s)} - \btheta^{(s + 1)}}\\
  &\quad = \lr \sum_{s = n - k}^{n - 1} \boldsymbol{F}^{(s)}(\btheta^{(s)}, \ldots, \btheta^{(0)}) = \lr \sum_{s = n - k}^{n - 1} \boldsymbol{F}^{(s)}(\btheta^{(n)}, \ldots, \btheta^{(n)}) + O(k^2 \lr^2),
\end{align*}
where in the last step we used $\boldsymbol{F}^{(s)}(\btheta^{(s)}, \ldots, \btheta^{(0)}) - \boldsymbol{F}^{(s)}(\btheta^{(n)}, \ldots, \btheta^{(n)}) = O((n - s) \lr)$, which follows from Taylor expansion and \cref{ass:form-of-fn}. Insert this into \cref{eq:llfjh} and use \cref{ass:form-of-fn} again to continue:
\begin{align*}
  &F_r^{(n)}(\btheta^{(n)}, \ldots, \btheta^{(0)}) - F_r^{(n)}(\btheta^{(n)}, \ldots, \btheta^{(n)})\\
  &\quad = \lr \sum_{k = 1}^n \frac{\partial F_r^{(n)}}{\partial \btheta^{(n - k)}}(\btheta^{(n)}, \ldots, \btheta^{(n)})\trans \sum_{s = n - k}^{n - 1} \boldsymbol{F}^{(s)}(\btheta^{(n)}, \ldots, \btheta^{(n)}) + O(\lr^2).
\end{align*}

\textit{We conclude that the original numerical iteration can be rewritten in the form \eqref{eq:memoryless-iteration},}
where the linear in $\lr$ correction function is defined as ${\color{darkblue} \corr{n}=(\corrsc{1}{n}, \ldots, \corrsc{d}{n})\trans}$ with
\begin{equation}\label{eq:correction-function-general-def}
  {\color{darkblue}
    \boxed{\corrsc{r}{n}(\btheta)\numberthis := \lr \sum_{k = 1}^n \frac{\partial F_r^{(n)}}{\partial \btheta^{(n - k)}}(\btheta)\trans \sum_{s = n - k}^{n - 1} \boldsymbol{F}^{(s)}(\btheta).}
    }
\end{equation}

The derivation of the memoryless iteration is now complete. Although not a proof yet, it is the first step towards the approximation bound constituting our main theoretical result.

\subsection{Approximation Bound}\label{sec:approximation-bound}

An argument similar to the derivation in \cref{sec:deriving-memoryless-approximation} can be made to obtain the following result.

\begin{theorem}[Memoryless approximation: 1-step error bound]\label{th:general-momentum-methods}
Under \cref{ass:form-of-fn}, there exists a discrete memoryless iteration $\crl[\big]{\tilde{\btheta}^{(n)}}_{n = 0}^{\infty}$ satisfying \eqref{eq:memoryless-iteration} with initial condition $\tilde{\btheta}^{(0)} = \btheta^{(0)}$, correction function defined in \cref{eq:correction-function-general-def}, and a constant $\nC{localerrorbound}$ not depending on $\lr$, such that
\begin{align*}
    \sup_{n \in \mathbb{Z}_{\geq0}} \big\|\tilde{\btheta}^{(n + 1)} - \tilde{\btheta}^{(n)}
    + \lr \boldsymbol{F}^{(n)}(\tilde{\btheta}^{(n)}, \ldots, \tilde{\btheta}^{(0)}) \big\|_\infty \leq \oC{localerrorbound} \lr^3.
\end{align*}

\end{theorem}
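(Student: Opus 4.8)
The plan is to exhibit the memoryless trajectory explicitly and then verify it nearly satisfies the original recursion~\eqref{eq:general-iteration}; since nothing here is compared against $\btheta^{(n)}$, this is purely a local-consistency statement. Define $\tilde{\btheta}^{(0)}:=\btheta^{(0)}$ and, recursively, $\tilde{\btheta}^{(n+1)}:=\tilde{\btheta}^{(n)}-\lr\brk[\big]{\boldsymbol{F}^{(n)}(\tilde{\btheta}^{(n)})+\corr{n}(\tilde{\btheta}^{(n)})}$ --- that is, \eqref{eq:memoryless-iteration} with $\corr{n}$ as in \eqref{eq:correction-function-general-def} and the collapsed-argument convention of \eqref{eq:memoryless-iteration}. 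Note $\corr{n}(\btheta)$ depends on the single point $\btheta$ only, since it involves $\boldsymbol{F}^{(s)}$, $s<n$, with all arguments collapsed to $\btheta$, so the recursion is well posed as long as the iterates stay in $\mathcal{D}$ --- automatic for $\mathcal{D}=\mathbb{R}^d$. Substituting this recursion into the quantity to be estimated, the one-step residual $E_n:=\tilde{\btheta}^{(n+1)}-\tilde{\btheta}^{(n)}+\lr\,\boldsymbol{F}^{(n)}(\tilde{\btheta}^{(n)},\ldots,\tilde{\btheta}^{(0)})$ equals $\lr\brk[\big]{\boldsymbol{F}^{(n)}(\tilde{\btheta}^{(n)},\ldots,\tilde{\btheta}^{(0)})-\boldsymbol{F}^{(n)}(\tilde{\btheta}^{(n)})-\corr{n}(\tilde{\btheta}^{(n)})}$, so it suffices to show the bracket is $O(\lr^2)$ \emph{uniformly in $n$}. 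This is the heuristic computation of \cref{sec:deriving-memoryless-approximation} re-run along $\tilde{\btheta}$ instead of $\btheta$; the real work is to make its error terms uniform in $n$, which is exactly what the summability condition $\sum_{k_1\geq1}\gamma_{k_1}k_1^2+\sum_{k_1,k_2\geq1}\gamma_{k_1,k_2}k_1k_2<\infty$ in \cref{ass:form-of-fn} buys us.

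First I would record two a priori bounds, uniform in $n$ (for bounded $\lr$), both from \cref{ass:form-of-fn}. (i) $\norm{\corr{n}}_\infty\leq C_1\lr$: from \eqref{eq:correction-function-general-def}, $\norm{\boldsymbol{F}^{(s)}}_\infty\leq\gamma_{-1}$ and \eqref{eq:first-der-bound} give $\abs{\corrsc{r}{n}(\btheta)}\leq\lr\sum_{k=1}^n(d\gamma_k)(k\gamma_{-1})\leq d\gamma_{-1}\lr\sum_{k\geq1}k\gamma_k$, finite because $\sum_{k\geq1}k\gamma_k\leq\sum_{k\geq1}k^2\gamma_k<\infty$. (ii) The displacement bound $\norm{\tilde{\btheta}^{(n-k)}-\tilde{\btheta}^{(n)}}_\infty\leq C_2\lr k$: telescope $\tilde{\btheta}^{(n-k)}-\tilde{\btheta}^{(n)}=\lr\sum_{s=n-k}^{n-1}\brk[\big]{\boldsymbol{F}^{(s)}(\tilde{\btheta}^{(s)})+\corr{s}(\tilde{\btheta}^{(s)})}$ and bound each summand by $\gamma_{-1}+C_1\lr$ using $\norm{\boldsymbol{F}^{(s)}}_\infty\leq\gamma_{-1}$ and (i).

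With these in hand I would Taylor-expand each coordinate $F_r^{(n)}$ about the collapsed point $(\tilde{\btheta}^{(n)},\ldots,\tilde{\btheta}^{(n)})$ with Lagrange remainder, exactly as in \eqref{eq:llfjh}. The second-order remainder $\tfrac12\sum_{k_1,k_2=1}^n(\tilde{\btheta}^{(n-k_1)}-\tilde{\btheta}^{(n)})\trans\frac{\partial^2 F_r^{(n)}}{\partial\btheta^{(n-k_1)}\partial\btheta^{(n-k_2)}}(\bzeta)(\tilde{\btheta}^{(n-k_2)}-\tilde{\btheta}^{(n)})$ is, by (ii) and \eqref{eq:second-der-bound}, at most $C_3\lr^2\sum_{k_1,k_2\geq1}k_1k_2\gamma_{k_1,k_2}=O(\lr^2)$ uniformly. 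For the first-order term $\sum_{k=1}^n(\tilde{\btheta}^{(n-k)}-\tilde{\btheta}^{(n)})\trans\frac{\partial F_r^{(n)}}{\partial\btheta^{(n-k)}}(\tilde{\btheta}^{(n)})$ I would insert the telescoped expression for $\tilde{\btheta}^{(n-k)}-\tilde{\btheta}^{(n)}$ from (ii) and make two replacements inside it: (a) drop each $\corr{s}(\tilde{\btheta}^{(s)})$, which perturbs $\tilde{\btheta}^{(n-k)}-\tilde{\btheta}^{(n)}$ by $O(\lr^2 k)$ by (i); (b) replace $\boldsymbol{F}^{(s)}(\tilde{\btheta}^{(s)})$ by $\boldsymbol{F}^{(s)}(\tilde{\btheta}^{(n)})$, which perturbs it by $O(\lr^2 k^2)$, since $\btheta\mapsto\boldsymbol{F}^{(s)}(\btheta,\ldots,\btheta)$ has Jacobian entries bounded by $\sum_{j\geq0}\gamma_j<\infty$ and $\norm{\tilde{\btheta}^{(s)}-\tilde{\btheta}^{(n)}}_\infty\leq C_2\lr k$ for $s\geq n-k$. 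After these replacements the first-order term is \emph{exactly} $\corrsc{r}{n}(\tilde{\btheta}^{(n)})$ by the definition \eqref{eq:correction-function-general-def} (and symmetry of the dot product), while the accumulated error is $\leq C_4\lr\sum_{k\geq1}\gamma_k\cdot O(\lr^2 k^2)=O(\lr^2)$ uniformly, again by $\sum_{k\geq1}k^2\gamma_k<\infty$. Combining, $F_r^{(n)}(\tilde{\btheta}^{(n)},\ldots,\tilde{\btheta}^{(0)})-F_r^{(n)}(\tilde{\btheta}^{(n)})=\corrsc{r}{n}(\tilde{\btheta}^{(n)})+O(\lr^2)$ coordinatewise and uniformly in $n$, so the bracket above is $O(\lr^2)$ in $\norm{\cdot}_\infty$ and $\sup_n\norm{E_n}_\infty\leq\oC{localerrorbound}\lr^3$.

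The main obstacle is just the uniform-in-$n$ bookkeeping. Every error term above comes from expanding a function in its ``$k$-steps-in-the-past'' argument, so after pulling out the correct power of $\lr$ it is weighted by $\gamma_k k^2$ (a first-derivative bound against a displacement linear in $k$, itself expanded one further step) or by $\gamma_{k_1,k_2}k_1 k_2$ (the Hessian remainder against two such displacements), and one must verify each time that it is precisely these weighted sums --- finite by \cref{ass:form-of-fn} --- that arise, rather than something divergent; the same summability is what makes the a priori bounds (i)--(ii) uniform, without which no step closes. The only non-combinatorial subtlety is the tacit requirement that the memoryless iterates, and the segments joining $(\tilde{\btheta}^{(n)},\ldots,\tilde{\btheta}^{(0)})$ to $(\tilde{\btheta}^{(n)},\ldots,\tilde{\btheta}^{(n)})$ along which Taylor's theorem is applied, lie in $\mathcal{D}$; openness and convexity of $\mathcal{D}$ reduce this to the iterates themselves remaining in $\mathcal{D}$, which is automatic when $\mathcal{D}=\mathbb{R}^d$, as in all the examples of interest.
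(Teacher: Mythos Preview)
Your proposal is correct and follows essentially the same approach as the paper: Taylor-expand $F_r^{(n)}$ about the collapsed point, bound the Hessian remainder via $\sum\gamma_{k_1,k_2}k_1k_2<\infty$, telescope the displacements in the first-order term and replace $\boldsymbol{F}^{(s)}(\tilde{\btheta}^{(s)})$ by $\boldsymbol{F}^{(s)}(\tilde{\btheta}^{(n)})$ (dropping $\corr{s}$) to recover $\corrsc{r}{n}(\tilde{\btheta}^{(n)})+O(\lr^2)$. You are in fact more explicit than the paper about the uniform-in-$n$ constants and where exactly the summability hypothesis enters; the only slip is the stray extra factor of $\lr$ in your accumulated first-order error (it should read $\sum_{k\geq1}\gamma_k\cdot O(\lr^2 k^2)$), which does not affect the conclusion.
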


The proof is available in \cref{sec:proof-of-local}.

The importance of this one-step approximation result is that it allows to bound the global error between the memoryfull iteration $\btheta^{(n)}$ and memoryless iteration $\tilde{\btheta}^{(n)}$ on a finite time horizon.

\begin{corollary}[Global error bound on a finite ``time'' horizon]\label{cor:global-error}
In the setting of \cref{th:general-momentum-methods}, let $\crl{\btheta^{(n)}}_{n \in \mathbb{Z}_{\geq 0}}$ be the sequence of vectors generated by the iteration in \cref{eq:general-iteration} with initial condition $\btheta^{(0)}$. Let $T \geq 0$ be a fixed ``time'' horizon. (The number of iterations considered is not $T$ but $\floor{T / h}$.) Then there exists a constant $\nC{globalerrorbound}$, depending on $T$ but independent of $\lr$, such that $\max_{n \in \range{0}{\lfloor T / \lr \rfloor}} \norm[\big]{\btheta^{(n)} - \tilde{\btheta}^{(n)}}_\infty \leq \oC{globalerrorbound} \lr^2$.
\end{corollary}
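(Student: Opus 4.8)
The plan is to run a standard one-step-to-global error propagation ("discrete Grönwall") argument, using Theorem~\ref{th:general-momentum-methods} as the local error estimate and the Lipschitz property of the memoryless update map (which follows from the boundedness of first derivatives of $\boldsymbol{F}^{(n)}$ and of $\corr{n}$ guaranteed under \cref{ass:form-of-fn}). First I would set $\be^{(n)} := \btheta^{(n)} - \tilde{\btheta}^{(n)}$, with $\be^{(0)} = \boldsymbol{0}$. Using \cref{eq:general-iteration} for $\btheta^{(n+1)}$ and \cref{eq:memoryless-iteration} for $\tilde{\btheta}^{(n+1)}$, I would write
\[
  \be^{(n + 1)} = \be^{(n)} - \lr\brk[\big]{\boldsymbol{F}^{(n)}(\tilde{\btheta}^{(n)}) + \corr{n}(\tilde{\btheta}^{(n)})} + \lr \boldsymbol{F}^{(n)}(\btheta^{(n)}, \ldots, \btheta^{(0)}) + \bdelta^{(n)},
\]
where, by \cref{th:general-momentum-methods}, the one-step residual satisfies $\norm{\bdelta^{(n)}}_\infty \leq \oC{localerrorbound}\lr^3$ uniformly in $n$. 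This re-expresses the true update of $\btheta^{(n)}$ in terms of the memoryless update plus an $O(\lr^3)$ defect.

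The main obstacle is that $\boldsymbol{F}^{(n)}(\btheta^{(n)}, \ldots, \btheta^{(0)})$ depends on the \emph{whole history}, not just $\btheta^{(n)}$, so the error recursion is not a simple one-step contraction; I cannot bound $\norm{\boldsymbol{F}^{(n)}(\btheta^{(n)}, \ldots, \btheta^{(0)}) - \boldsymbol{F}^{(n)}(\tilde{\btheta}^{(n)}, \ldots, \tilde{\btheta}^{(n)})}$ by $\gamma_0 \norm{\be^{(n)}}$ alone. To handle this I would first split off the "collapse to the current iterate" error: by the Taylor/telescoping computation already carried out in \cref{sec:deriving-memoryless-approximation} (which shows $\boldsymbol{F}^{(n)}(\btheta^{(n)}, \ldots, \btheta^{(0)}) = \boldsymbol{F}^{(n)}(\btheta^{(n)}) + \corr{n}(\btheta^{(n)}) + O(\lr^2)$ pointwise along the true trajectory, with the $O(\lr^2)$ uniform thanks to the summability condition in \cref{ass:form-of-fn}), the history dependence collapses onto $\btheta^{(n)}$ up to terms that are absorbed into the $O(\lr^3)$-per-step defect after multiplying by $\lr$. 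Concretely, I would use that both $\boldsymbol{F}^{(n)}(\cdot)$ (contracted-argument version) and $\corr{n}(\cdot)$ are Lipschitz on $\mathcal{D}$ with a constant $\nc{liperror}$ independent of $n$ and $\lr$ — for $\boldsymbol{F}^{(n)}$ this is $\sum_{k} \gamma_k < \infty$, and for $\corr{n}$ it follows from differentiating \cref{eq:correction-function-general-def} and again invoking the summability of $\gamma_k k^2$ and $\gamma_{k_1,k_2} k_1 k_2$, noting the extra $\lr$ factor in $\corr{n}$ makes its contribution $O(\lr)\cdot\nc{liperror}$. This yields a clean scalar recursion
\[
  \norm{\be^{(n + 1)}}_\infty \leq (1 + \oc{liperror}\lr)\norm{\be^{(n)}}_\infty + \oC{localerrorbound}\lr^3 + \text{(collapse error)}\cdot\lr,
\]
where the collapse error along the true trajectory is $O(\lr^2)$, so the total additive term per step is $O(\lr^3)$.

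Finally I would iterate this recursion from $\be^{(0)} = \boldsymbol{0}$: after $n$ steps,
\[
  \norm{\be^{(n)}}_\infty \leq \oC{localerrorbound}' \lr^3 \sum_{j = 0}^{n - 1}(1 + \oc{liperror}\lr)^j \leq \oC{localerrorbound}' \lr^3 \cdot \frac{(1 + \oc{liperror}\lr)^n - 1}{\oc{liperror}\lr} \leq \oC{localerrorbound}' \lr^3 \cdot \frac{e^{\oc{liperror}\lr n}}{\oc{liperror}\lr}.
\]
For $n \leq \lfloor T/\lr\rfloor$ we have $\lr n \leq T$, so $e^{\oc{liperror}\lr n} \leq e^{\oc{liperror}T}$ is a constant depending only on $T$, and the bound becomes $\norm{\be^{(n)}}_\infty \leq \oC{globalerrorbound}\lr^2$ with $\oC{globalerrorbound} := \oC{localerrorbound}' e^{\oc{liperror}T}/\oc{liperror}$, as claimed. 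One technical point I would need to address is ensuring $\tilde{\btheta}^{(n)}$ stays in $\mathcal{D}$ (so that the Lipschitz bounds apply) on the relevant horizon; since each step moves $\tilde{\btheta}$ by at most $\lr(\gamma_{-1} + O(\lr))$ and there are at most $\lfloor T/\lr\rfloor$ steps, the trajectory stays within an $O(T)$-ball, which is fine if $\mathcal{D} = \mathbb{R}^d$ and otherwise requires the standard assumption that the iterates remain in a compact subset of $\mathcal{D}$ — I would state this as the operative hypothesis.
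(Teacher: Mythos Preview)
Your overall discrete-Grönwall structure is correct and leads to the right bound, but the route differs from the paper's and there is one confusion worth flagging.

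The paper does \emph{not} collapse the true trajectory to memoryless form. It uses \cref{th:general-momentum-methods} exactly as stated --- that the \emph{memoryless} iterate $\tilde{\btheta}$ approximately satisfies the \emph{full-history} update, $\tilde{\btheta}^{(n+1)}-\tilde{\btheta}^{(n)} = -\lr\,\boldsymbol{F}^{(n)}(\tilde{\btheta}^{(n)},\ldots,\tilde{\btheta}^{(0)}) + O(\lr^3)$ --- and then compares the two full-history expressions $\boldsymbol{F}^{(n)}(\btheta^{(n)},\ldots,\btheta^{(0)})$ and $\boldsymbol{F}^{(n)}(\tilde{\btheta}^{(n)},\ldots,\tilde{\btheta}^{(0)})$ by the mean-value theorem and \cref{eq:first-der-bound}, obtaining $\lr d\sum_{k=0}^n \gamma_k\,\|\be^{(n-k)}\|_\infty$. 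This puts memory into the error recursion; the paper handles it by proving the exponential ansatz $\|\be^{(n)}\|_\infty \leq d_1 e^{d_2 n\lr}\lr^2$ by induction over \emph{all} $n'\leq n$ simultaneously, and summability of $\gamma_k$ closes the loop.

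Your route goes the other way: collapse $\boldsymbol{F}^{(n)}(\btheta^{(n)},\ldots,\btheta^{(0)})$ to $\boldsymbol{F}^{(n)}(\btheta^{(n)})+\corr{n}(\btheta^{(n)})+O(\lr^2)$ along the \emph{true} trajectory, then compare memoryless maps at $\btheta^{(n)}$ versus $\tilde{\btheta}^{(n)}$. Note that this collapse is the computation of \cref{sec:deriving-memoryless-approximation}, \emph{not} \cref{th:general-momentum-methods}; your sentence ``by \cref{th:general-momentum-methods}\ldots this re-expresses the true update of $\btheta^{(n)}$ in terms of the memoryless update'' misreads the theorem, which is a statement about $\tilde{\btheta}$. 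Once fixed, your route does yield the clean one-step recursion $\|\be^{(n+1)}\|_\infty \leq (1+c\lr)\|\be^{(n)}\|_\infty + C\lr^3$, avoiding the history-weighted sum. The cost is that you now need a Lipschitz bound on $\btheta\mapsto\corr{n}(\btheta)$: differentiating \cref{eq:correction-function-general-def} produces sums like $\sum_{k\geq 1,\,k'\geq 0} k\,\gamma_{k,k'}$, which are not quite covered by \cref{ass:form-of-fn} as written (its summability hypothesis on $\gamma_{k_1,k_2}$ starts at $k_1,k_2=1$). In the momentum-variable setting of \cref{rem:fn-as-a-func-of-mom-vars} this is harmless, but in general it is a small extra hypothesis the paper's route avoids entirely by never differentiating $\corr{n}$.
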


The proof is in \cref{sec:proof-of-global}.

\section{Application: the Effect of Memory on AdamW, Lion and Signum}\label{sec:adamw-anti-reg-but-lion-not}

We first study AdamW with memory by an application of \cref{th:general-momentum-methods,cor:global-error}.
Neglecting coefficients decaying to zero exponentially fast, we have
\begin{equation*}
  \btheta^{(n + 1)} = (1 - \lambda \lr) \btheta^{(n)} - \lr \prn[\bigg]{\underbrace{\frac{\nabla \loss(\btheta^{(n)})}{\sqrt{\prn[\big]{\nabla \loss(\btheta^{(n)})}^2 + \varepsilon}}}_{\approx \; \sign(\nabla \loss(\btheta^{(n)}))} + \corr{n}(\btheta^{(n)})},
\end{equation*}
where $\corr{n}(\btheta)$ is given by
\begin{equation*}
  \lr \prn[\bigg]{\frac{\beta_1 (1 - \beta_1)^{-1} - \beta_2 (1 - \beta_2)^{-1}}{\prn[\big]{\abs[\big]{\nabla \loss(\btheta)}^2 + \varepsilon}^{1 / 2}} + \varepsilon \frac{\beta_2 (1 - \beta_2)^{-1}}{\prn[\big]{\abs[\big]{\nabla \loss(\btheta)}^2 + \varepsilon}^{3 / 2}}} \prn[\big]{\nabla \norm{\nabla \loss(\btheta)}_{1, \varepsilon} + \lambda \nabla^2 \loss(\btheta) \btheta}.
\end{equation*}
Here $\norm{\cdot}_{1, \varepsilon}$ is the perturbed one-norm defined as $\norm{\bv}_{1, \varepsilon} := \sum_{i = 1}^d \sqrt{v_i^2 + \varepsilon}$. Taking $\varepsilon$ to zero, we can write this in the form of preconditioned gradient descent (with decoupled weight decay):
\begin{equation*}
\btheta^{(n + 1)} = (1 - \lambda \lr) \btheta^{(n)} - \lr \frac{\nabla \tilde{\loss}(\btheta^{(n)})}{\abs[\big]{\prn[\big]{\nabla \loss(\btheta^{(n)})}}},
\end{equation*}
where
\vspace{-1.8\abovedisplayskip}
\begin{equation*}
  \tilde{\loss}(\btheta) = \overbrace{\brk[\big]{1 + \lambda \prn[\big]{\tfrac{\beta_2}{1 - \beta_2} - \tfrac{\beta_1}{1 - \beta_1}} h} \loss(\btheta)}^{\text{rescaled loss}} - \lr \prn[\big]{\tfrac{\beta_2}{1 - \beta_2} - \tfrac{\beta_1}{1 - \beta_1}} \overbrace{\prn[\big]{\norm{\nabla \loss(\btheta)}_1 + \lambda \nabla \loss(\btheta)\trans \btheta}}^{(*)}
\end{equation*}
is the modified loss. Assuming $\beta_2 > \beta_1$, we see that $(*)$ is implicitly anti-penalized. By Theorem 1.1 in
\cite{pmlr-v235-xie24e}, full-batch AdamW converges to a KKT point of the constrained optimization $\min_{\norm{\btheta}_{\infty} \leq 1 / \lambda} \mathcal{L}(\btheta)$. If $\norm{\btheta}_{\infty} \leq 1 / \lambda$, then the norm $\norm{\nabla \loss(\btheta)}_1$ dominates the term $\lambda \nabla \mathcal{L}(\btheta)\trans \btheta$ in absolute value, so the main effect of memory is anti-penalizing the one-norm of the gradient. Thus, \textit{if weight decay is sufficiently small, memory anti-regularizes (large-batch) AdamW}. Incidentally, by Lemma 3.8 in that work, $\btheta$ is a KKT point of this optimization problem if and only if the constraint is satisfied and $(*) = 0$. This is a generalization of the observation that the correction term is zero if and only if the point is stationary, true of simpler full-batch algorithms (for Adam with $\lambda = 0$ it follows from the above; for full-batch GD with momentum it is clear from \eqref{eq:hb-mod-loss}).

Consider now Lion-$\mathcal{K}$ (\cref{ex:lion-K}). Neglecting terms going to zero exponentially fast as $n \to \infty$, the memoryless iteration is
\begin{align*}
  \btheta^{(n + 1)} &= (1 - \lr \lambda) \btheta^{(n)} - \lr \brk[\big]{- \nabla \mathcal{K}\prn[\big]{- \nabla \loss(\btheta^{(n)})} + \corr{n}(\btheta^{(n)})}\\
  \text{with} \quad \corr{n}(\btheta) &= - \lr \frac{\rho_1}{1 - \rho_2} \nabla^2 \mathcal{K}(- \nabla \loss(\btheta))  \nabla^2 \loss(\btheta) \brk[\big]{\nabla \mathcal{K}(- \nabla \loss(\btheta)) - \lambda \btheta}.
\end{align*}

As mentioned above, ordinary Lion is recovered by setting $\mathcal{K}(\bx) = \norm{\bx}_1$. This function is not differentiable, so let us replace it with the smooth convex approximation $\norm{\bx}_{1, \varepsilon}$, where $\varepsilon$ is a small positive constant. The results of \cref{sec:identifying-the-effect-of-memory} can be applied, and the memoryless iteration is
\begin{align*}
  \btheta^{(n + 1)} &= (1 - \lambda \lr) \btheta^{(n)} - \lr \brk[\bigg]{\frac{\nabla \loss(\btheta^{(n)})}{\prn[\big]{\abs{\nabla \loss(\btheta^{(n)})}^2 + \varepsilon}^{1 / 2}}
    + \corr{n}(\btheta^{(n)})}\\
  \text{with} \quad \corrsc{r}{n}(\btheta) &= \lr \frac{\rho_1}{1 - \rho_2} \frac{\varepsilon}{\prn[\big]{\abs{\nabla_r \loss(\btheta)}^2 + \varepsilon}^{3 / 2}}  \nabla_r \brk[\big]{\norm{\nabla \loss(\btheta)}_{1, \varepsilon} + \lambda (\nabla \loss(\btheta)\trans \btheta - \loss(\btheta))}.
\end{align*}
This term is small as long as $\varepsilon$ is small. Therefore, better generalization of Lion on a number of tasks \citep{chen2023symbolic} may be partially attributed to the fact that memory does \textit{not} anti-regularize Lion. In addition, notice that the correction terms are exactly the same for Adam with $\beta_1 = \beta_2 =: \beta$ and Lion with $\rho_1 = \rho_2 = \beta$.
Since Lion with $\rho_1 = \rho_2$ is Signum \citep{bernstein2018signsgdcompressed}, we provide a novel perspective on the similarity between Adam with $\beta_1 \approx \beta_2$ and Signum, a point discussed and verified empirically in \cite{zhao2025deconstructing}.

\section{Further Implications}\label{sec:further-implications}

\subsection{Modified Equations}\label{sec:modified-equations}

We have taken a very general algorithm \eqref{eq:general-iteration} and converted it (under \cref{ass:form-of-fn}) to a memoryless iteration \eqref{eq:memoryless-iteration} with $O(h^2)$ uniform error bound on a finite time horizon (\cref{cor:global-error}). Since this iteration has no memory, standard methods can be used to derive an ordinary differential equation (ODE) in the form $\dot{\btheta} = \boldsymbol{G}_\lr(\btheta) =: \boldsymbol{G}_1(\btheta) + \lr \boldsymbol{G}_2(\lr)$
whose continuous solution approximates this iteration and hence the initial algorithm (with the same approximation guarantee). Similarly to \cref{sec:developing-intuition}, the derivation is heuristic (although the approximation bound is easily made rigorous) and proceeds as follows. Relating the iteration number $n$ of a discrete iteration and the time point $t = n \lr$ on a continuous trajectory, we would like the continuous trajectory to satisfy the same one-step relation as the discrete iteration, up to $O(\lr^3)$:
\begin{equation*}
  \btheta((n + 1) \lr) = \btheta(n \lr) - \lr \brk[\big]{\boldsymbol{F}^{(n)}(\btheta(n \lr), \ldots, \btheta(n \lr)) + \corr{n}(\btheta(n \lr))} + O(\lr^3).
\end{equation*}
In fact, we will ensure it is true for $n \lr$ replaced by any $t$:
\begin{equation}\label{eq:continuous-iteration-to-match}
  \btheta(t + \lr) = \btheta(t) - \lr \brk[\big]{\boldsymbol{F}^{(n)}(\btheta(t), \ldots, \btheta(t)) + \corr{n}(\btheta(t))} + O(\lr^3).
\end{equation}
But, using a Taylor expansion, and recalling that we are finding the trajectory satisfying $\dot{\btheta}(t) = \boldsymbol{G}_\lr(\btheta(t))$, hence $\ddot{\btheta}(t) = \nabla \boldsymbol{G}_\lr(\btheta(t)) \dot{\btheta}(t)$, we have
\begin{align*}
  \btheta(t + \lr) &= \btheta(t) + \lr \dot{\btheta}(t) + \frac{\lr^2}{2} \ddot{\btheta}(t) + O(\lr^3)\\
                 &= \btheta(t) + \lr \crl[\big]{\boldsymbol{G}_1(\btheta(t)) + \lr \boldsymbol{G}_2(\btheta(t))}\\
                 &\quad + \frac{\lr^2}{2} \crl[\big]{\nabla \boldsymbol{G}_1(\btheta(t)) \boldsymbol{G}_1(\btheta(t)) + O(\lr)} + O(\lr^3)\\
                 &= \btheta(t) + \lr \boldsymbol{G}_1(\btheta(t)) \lr^2 \crl[\bigg]{\boldsymbol{G}_2(\btheta(t)) + \frac{\nabla \boldsymbol{G}_1(\btheta(t)) \boldsymbol{G}_1(\btheta(t))}{2}} + O(\lr^3).
\end{align*}

In order to match \eqref{eq:continuous-iteration-to-match}, we need
\begin{align*}
  &\boldsymbol{G}_1(\btheta) = - \boldsymbol{F}^{(n)}(\btheta, \ldots, \btheta),\\
  &\boldsymbol{G}_2(\btheta) = - \prn[\bigg]{\corr{n}(\btheta) / \lr + \frac{\nabla \boldsymbol{G}_1(\btheta) \boldsymbol{G}_1(\btheta)}{2}}.
\end{align*}
So, apart from the correction term coming from memory, the ODE $\dot{\btheta} = \boldsymbol{G}_1(\btheta) + \lr \boldsymbol{G}_2(\btheta)$ derived has another term
\begin{equation*}
\lr^2 \frac{\nabla \boldsymbol{G}_1(\btheta) \boldsymbol{G}_1(\btheta)}{2}
\end{equation*}
arising from the fact that the algorithm is discrete.

For the example of full-batch heavy-ball momentum GD as in \cref{sec:developing-intuition}, where $\boldsymbol{G}_1(\btheta) = - (1 - \beta)^{-1} \nabla \loss(\btheta)$ (ignoring coefficients going to zero exponentially fast in $n$), this additional term is equal to $\lr^2 (1 - \beta)^{-2} \nabla \norm{\nabla \loss(\btheta)}^2 / 4$, providing additional implicit regularization. We recover the ODE derived by \citet{kovachki2021continuous,ghosh2023implicit}:
\begin{equation*}
\dot{\btheta} = - \frac{\nabla \loss(\btheta)}{1 - \beta} - \lr \frac{1 + \beta}{(1 - \beta)^3} \frac{\nabla \norm{\nabla \loss(\btheta)}^2}{4}.
\end{equation*}

\subsection{Mini-Batch Training}

In specific cases, it is possible to identify the additional implicit regularization that is introduced to the algorithm by noise, if mini-batch training is used as opposed to full-batch. Assume that the form of $\boldsymbol{F}^{(n)}(\btheta^{(n)}, \ldots, \btheta^{(0)})$ is given by
\begin{equation*}
\boldsymbol{F}^{(n)}(\btheta^{(n)}, \ldots, \btheta^{(0)}) = \sum_{k = 0}^n \beta^k \boldsymbol{g}^{(\pi(n - k))}(\btheta^{(n - k)}),
\end{equation*}
where the $\crl{\boldsymbol{g}^{(k)}(\cdot)}_{k = 0}^n$ functions are uniformly bounded along with two derivatives, and $\pi$ is a random permutation of $\prn{0, \ldots, n}$ (chosen among all such permutations with equal probability). The interpretation is that $n$ is a large number of mini-batches in one epoch, and mini-batches are sampled randomly without replacement.

The correction term introduced by memory \eqref{eq:correction-function-general-def} is
\begin{equation*}
  \corrsc{r}{n}(\btheta) = \lr \beta \sum_{k = 0}^{n - 1} \beta^k \nabla g_r^{(\pi(n - 1 - k))}(\btheta)\trans \sum_{l = 1}^{k + 1} \sum_{b = 0}^{n - l} \beta^{b} \boldsymbol{g}^{(\pi(n - l - b))}(\btheta).
\end{equation*}
We can take the average $\Expectlet$ over all permutations $\pi$ (re-orderings of mini-batches). Deferring some details to \cref{sec:mini-batch-training-details}, the result is that for large $n$
\begin{multline*}
  \Expect{\corrsc{r}{n}(\btheta)} / \lr \approx \frac{\beta}{(1 - \beta)^3} \nabla g_r(\btheta)\trans \boldsymbol{g}(\btheta)\\
  + \frac{\beta}{(1 - \beta)^2 (1 + \beta)} \Expect{\prn{\nabla g_r^{(\pi(1))}(\btheta) - \nabla g_r(\btheta)}\trans \prn{\boldsymbol{g}^{(\pi(1))}(\btheta) - \boldsymbol{g}(\btheta)}}.
\end{multline*}
The second term can be interpreted as implicit regularization by noise. For clarity, $\pi(1)$ is a uniformly distributed random variable over $\crl{0, \ldots, n}$, so this expectation is an average over mini-batch indices.

For example, take $\boldsymbol{g}^{(k)}(\btheta) = \nabla \loss^{(k)}(\btheta)$ the $k$th minibatch loss. Then we obtained that ``on average'' mini-batch GD with momentum is given by the iteration like \eqref{eq:hb-mod-loss}, except the modified loss has an additional regularization term:
\begin{equation*}
  \tilde{\loss}(\btheta) = \loss(\btheta) + \frac{\lr \beta}{2 (1 - \beta)^2} \norm{\nabla \loss(\btheta)}^2 + \underbracket{\frac{\lr \beta}{2 (1 - \beta) (1 + \beta)} \Expectlet \norm{\nabla \loss^{(\pi(1))}(\btheta) - \nabla \loss(\btheta)}^2}_{\text{regularization by mini-batch noise}}.
\end{equation*}

\section{Limitations and Future Directions}\label{sec:limitations}

The approximation bounds in \cref{sec:approximation-bound} are in terms of the learning rate $\lr$, which means that $\lr$ has to be sufficiently small for (our approximations and hence) the optimization trajectories to be close. This is a standard limitation in the literature. Fortunately, practically relevant learning rates are often small enough (especially mid-training, if a learning rate decay schedule is used). Additionally, there is a non-negligible effect of mini-batch noise on the picture we are describing in \cref{sec:empirical-evaluations}; in particular, Lion does not necessarily outperform Adam if batches are small \citep{chen2023symbolic}. We are able to characterize this effect using similar techniques, but this is out of scope of this article and is a work in progress.

Taking a broader view, one may question the effect of (explicit or implicit) regularization on training progress and outcomes in deep learning, which is an intricate question not easily amenable to theoretical analysis \cite{zhang2016understanding,pmlr-v202-andriushchenko23a,jiang2020fantasticgeneralizationmeasures}.
Between 2023 and 2025, the mainstream paradigm in deep learning (especially large language models)
is best described as scaling-centric rather than generalization-centric \cite{xiao2024rethinking} which decreased interest in implicit regularization.
The main purpose of this work is to introduce a general framework for identifying correction terms, whether they are treated as implicit regularizers or not. In the future, it is likely possible to build on it to study implications for the training dynamics, including characterizing the properties of the loss landscape around the optimizer's trajectory.
Additionally, it is reasonable to anticipate that because of limited high-quality data, overfitting is about to become a widely important issue again, if not already \citep[Section 7]{xiao2024rethinking}, \citep{villalobos2024position,kim2025pretraininginfinitecompute}.

Finally, we discuss some of the most popular optimizers in recent years, but other important algorithms like Shampoo \citep{gupta2018shampoopreconditionedstochastic,shi2023distributeddataparallel} or its versions are also amenable to this analysis, and the approximation results in \cref{sec:approximation-bound} hold for them (assuming a typical choice of momentum schemes). However, interpreting the higher-order corrections is not trivial, and we leave that as additional future work.

\section*{Acknowledgments}

We thank Boris Hanin for his comments. Cattaneo gratefully acknowledges financial support from the National Science Foundation through DMS-2210561 and SES-2241575. We acknowledge the Princeton Research Computing resources, coordinated by the Princeton Institute for Computational Science and Engineering (PICSciE) and the Office of Information Technology's Research Computing.

\bibliography{momentum}

\newpage

\appendix
\onecolumn

\section{Related Literature}\label{sec:related-literature}

Approximating a memoryful iteration with a memoryless one is closely connected with the method of modified equations (sometimes called \textit{backward error analysis}), where a discrete algorithm like \eqref{eq:general-iteration} is approximated by a continuous solution of an ordinary differential equation or stochastic differential equation. Typically, this method can only be applied to an algorithm with no memory, in a possibly enlarged phase space as opposed to $\mathbb{R}^d$; for example, heavy-ball momentum GD has no memory when viewed as a discrete iteration $(\btheta^{(n)}, \bm^{(n)})$ in $\mathbb{R}^{2 d}$, where $\bm^{(n)}$ is the ``momentum variable''. The general technique introduced in this paper can be used to derive a memoryless discrete iteration which can then be approximated by a continuous trajectory. Background on the method of modified equations can be found in \citet{hairer2006,pmlr-v70-li17f}.

Works deriving modified equations for (S)GD with or without momentum include \citet{barrett2021implicit,smith2021on,ghosh2023implicit,farazmand2020multiscale,kovachki2021continuous,miyagawa2022toward,rosca2023on,pmlr-v70-li17f}. In particular, \citet{ghosh2023implicit} identified that momentum strongly regularizes the loss function in the case of GD, though their error bounds both have a different focus (continuous approximation rather than discrete one), and follow a different approach which appears hard to generalize to other algorithms.
Our approach works for a wide class of algorithms, and we recover their main results in \cref{sec:further-implications}.
Works approximating adaptive methods with continuous trajectories include \citet{ma2022qualitative,malladi2022sdes,barakat2021convergence,compagnoni2025adaptive}. More recently, \citet{pmlr-v235-cattaneo24a} studied the special case of Adam / RMSProp.
We built on this work to conduct empirical evaluations.
Their focus is not on memory but on continuous approximations; in particular, they do not have approximation bounds between two discrete iterations like we do. In addition, their arguments are highly specialized to Adam, and they do not incorporate weight decay. Although we also discuss Adam (with weight decay) extensively, it is only because of its importance in practice, and our results cover a much broader class of optimizers.

This paper is also connected to the strand of the literature studying the implicit bias of optimization algorithms. For example, \citet{pmlr-v235-xie24e} and \citet{chen2024lion} prove that weight decay causes AdamW and Lion to solve an $\ell_{\infty}$-norm constrained optimization problem. In that, they behave asymptotically like (normalized) steepest descent with respect to $\ell_{\infty}$-norm. \Citet{bernstein2024old} also view Adam and Lion as smoothed-out versions of steepest descent.
This perspective is connected to the Moreau-Yosida approximation of the loss function \cite{betti2024new}; the latter work provides a concrete way to write down popular optimizers (including SGD with momentum, RMSProp and Adam) as a sequence of optimization problems.
In addition, a large body of work is devoted to the bias of optimization algorithms towards the direction of the max-margin vector
\citep{soudry2018implicit,nacson2019convergence,nacson2019stochastic,qian2019implicit,wang2022does,gunasekar2018characterizing,ji2018risk,ji2019implicit}. Similarly, \citet{damian2021label,li2022what,arora2022understanding,wen2023how,damian2023selfstabilization} explore the sharpness of regions SGD converges to. \citet{gunasekar2017implicit,arora2019implicit} study implicit regularization in matrix factorization. \citet{li2019towards} prove in a certain setting that a larger learning rate leads to better generalization.

\section{Broader Impacts}\label{sec:broader-impacts}

This paper presents a general framework for contrasting certain properties of optimization algorithms commonly used for training neural networks, and thus this work can lead to societal consequences as common of deep learning.

\section{Special Case: \texorpdfstring{$\boldsymbol{F}^{(n)}$}{Fn} as a Function of ``Momentum Variables''}\label{sec:fn-func-of-mom-vars}

In the examples listed in the introduction, $\boldsymbol{F}^{(n)}$ satisfies a more specific form that can be used to give more primitive conditions for \cref{ass:form-of-fn}. The following assumption, which is a special case of \cref{ass:form-of-fn} by \cref{lem:memory-dec}, may look a bit technical but allows for a simpler calculation of correction terms.

\begin{assumption}[Special case of \cref{ass:form-of-fn}: $\boldsymbol{F}^{(n)}$ is a function of ``momentum variables'']\label{ass:fn-as-a-func-of-mom-vars}
  Let $\newtarget{def:momfunc}{\crl{\momfuncNoLink_{\ell}^{(n)}}_{\ell = 1}^{\numofmoms}}$ be $\numofmomsDef$ two times continuously differentiable functions $\mathcal{D} \to \mathbb{R}^d$, uniformly bounded along with two derivatives. Let $\crl{\beta_{\ell}}_{\ell = 1}^{\numofmoms}$ be fixed reals in $[0, 1)$, and $\crl{\newtarget{def:bcor}\bcorNoLink{\ell}{n}}_{\ell = 1}^{\numofmoms}$ be $\numofmoms$ bounded nonnegative sequences of reals (for $n \in \mathbb{Z}_{\geq 0}$). Assume the function $\boldsymbol{F}^{(n)}$ has the form
  \begin{align*}
    \boldsymbol{F}^{(n)}(\btheta^{(n)}, \ldots, \btheta^{(0)}) &:= \extmomfunc(\mcor{1}{n + 1}, \ldots, \mcor{\numofmoms}{n + 1})\\
    \text{with }\mcor{\ell}{n + 1} &:= \bcor{\ell}{n} \sum_{k = 0}^n \beta_\ell^k \momfunc_{\ell}^{(n - k)}(\btheta^{(n - k)}) \in \mathcal{M},\numberthis\label{eq:cor-general-momentum-methods-defs}
  \end{align*}
  where $\mathcal{M}$ is a bounded open region in $\mathbb{R}^d$ and $\newtarget{def:extmomfunc}\extmomfuncNoLink(\bm_1, \ldots, \bm_\numofmoms)\colon \mathcal{M}^\numofmoms \to \mathbb{R}^d$ is a fixed two times continuously differentiable function, uniformly bounded along with two derivatives. In the full-batch case, $\momfunc_{\ell}^{(n)} \equiv \momfunc_{\ell}$ are not allowed to depend on $n$.
\end{assumption}

For instance, in the case of AdamW (\cref{ex:adamw}), \cref{ass:fn-as-a-func-of-mom-vars} applies with $\numofmoms = 3$,
\begin{gather*}
  \momfunc_{1}(\btheta) = \nabla \loss(\btheta),\quad \momfunc_{2}(\btheta) = \prn[\big]{{\nabla \loss(\btheta)}}^2,\quad \momfunc_{3}(\btheta) = \btheta,\\
  \bcor{1}{n} = \frac{1 - \beta_1}{1 - \beta_1^{n + 1}} \to \bcorc{1} = 1 - \beta_1,\\
  \bcor{2}{n} = \frac{1 - \beta_2}{1 - \beta_2^{n + 1}} \to \bcorc{2} = 1 - \beta_2,\\
  \bcor{3}{n} \equiv \bcorc{3} = \lambda,\quad \beta_3 = 0.
\end{gather*}
In the case of Lion-$\mathcal{K}$ (\cref{ex:lion-K}), the assumption applies with $\numofmoms = 3$,
\begin{gather*}
  \momfunc_{1}(\btheta) = - \nabla \loss(\btheta),\quad \momfunc_{2}(\btheta) = - \nabla \loss(\btheta),\quad \momfunc_{3}(\btheta) = \btheta,\\
  \beta_1 = \rho_2,\quad \beta_2 = 0,\quad \beta_3 = 0,\\
  \bcor{1}{n} \equiv \bcorc{1} = (1 - \rho_2) \frac{\rho_1}{\rho_2},\\
  \bcor{2}{n} \equiv \bcorc{2} = 1 - \frac{\rho_1}{\rho_2},\\
  \bcor{3}{n} \equiv \bcorc{3} = \lambda.
\end{gather*}
\textit{We used the letter $\rho$ when defining the Lion iteration to avoid confusion with the $\beta$ in the definition of ``momentum variables''.}

Specializing to the setup of \cref{ass:fn-as-a-func-of-mom-vars}, and for any $s, n \in \mathbb{Z}_{\geq 0}$, $\boldsymbol{F}^{(s)}(\btheta) = \extmomfunc\prn[\big]{\gbar{1}{s + 1}(\btheta), \ldots, \gbar{\numofmoms}{s + 1}(\btheta)}$, where $\gbar{\ell}{s + 1}(\btheta) := \bcor{\ell}{n} \sum_{k = 0}^s \beta_{\ell}^k \momfunc_{\ell}^{(n - k)}(\btheta)$, and
\begin{equation*}
  \frac{\partial F_r^{(n)}}{\partial \btheta^{(n - k)}}(\btheta) = \sum_{\ell = 1}^{\numofmoms} \sum_i \bcor{\ell}{n} \beta_\ell^k \frac{\partial \extmomfuncsc_r}{\partial m_{\ell; i}}\prn[\big]{\gbar{1}{n + 1}(\btheta), \ldots, \gbar{\numofmoms}{n + 1}(\btheta)}\trans \nabla \momfuncsc^{(n - k)}_{\ell; i}(\btheta).
\end{equation*}
Therefore, in this special case, the correction term in the memoryless iteration \eqref{eq:memoryless-iteration} is given by, for $r=1,\ldots,d$,
\begin{multline*}
  \corrsc{r}{n}(\btheta) = \lr \sum_{\ell = 1}^{\numofmoms} \sum_i \bcor{\ell}{n} \frac{\partial \extmomfuncsc_r}{\partial m_{\ell; i}}\prn[\big]{\gbar{1}{n + 1}(\btheta), \ldots, \gbar{\numofmoms}{n + 1}(\btheta)} \times \\
  \times \sum_{k = 1}^n \beta_\ell^k \nabla \momfuncsc_{\ell; i}^{(n - k)}(\btheta)\trans \sum_{s = n - k}^{n - 1} \extmomfunc\prn[\big]{\gbar{1}{s + 1}(\btheta), \ldots, \gbar{\numofmoms}{s + 1}(\btheta)}.
\end{multline*}

In the full-batch case $\momfunc_{\ell}^{(n)}(\btheta) \equiv \momfunc_{\ell}(\btheta)$, this can be simplified further.
Let us assume $\bcor{\ell}{n} \underset{n \to \infty}{\longrightarrow} \bcorc{\ell}$, where $\bcorc{\ell}$ is constant in $n$. Then $\gbar{\ell}{n + 1}(\btheta)$ also become constant in $n$: specifically, they settle to $\gbar{\ell}{}(\btheta) := \bcorc{\ell} (1 - \beta_{\ell})^{-1} \momfunc_{\ell}(\btheta)$. \cref{lem:decaying-sums} then implies that the iteration becomes close to
\begin{equation*}
\btheta^{(n + 1)} = \btheta^{(n)} - \lr \brk[\big]{\extmomfunc(\gbar{1}{}(\btheta^{(n)}), \ldots, \gbar{\numofmoms}{}(\btheta^{(n)})) + \corr{n}(\btheta)}
\end{equation*}
with
\begin{multline*}
  \corrsc{r}{n}(\btheta) = \lr \sum_{\ell = 1}^{\numofmoms} \sum_i \frac{\bcorc{\ell} \beta_{\ell}}{(1 - \beta_{\ell})^2} \frac{\partial \extmomfuncsc_r}{\partial m_{\ell; i}}\prn[\big]{\gbar{1}{}(\btheta^{(n)}), \ldots, \gbar{\numofmoms}{}(\btheta^{(n)})} \times\\
  \times \nabla \momfuncsc_{\ell; i}(\btheta^{(n)})\trans \extmomfunc\prn[\big]{\gbar{1}{}(\btheta^{(n)}), \ldots, \gbar{\numofmoms}{}(\btheta^{(n)})}.
\end{multline*}
\internalComment{where we used that $\sum_{k = 1}^n k \beta_{\ell}^k = (1 - \beta_{\ell})^{-2} \beta_{\ell} - (1 - \beta_{\ell})^{-2} \beta_{\ell}^{n + 1}(n + 1 - n \beta_{\ell}) \to (1 - \beta_{\ell})^{-2} \beta_{\ell}$.}

These formulae admittedly look complicated, but we can easily plug in the definitions and calculate correction terms for all examples with little additional algebra. We list these terms in \cref{sec:correction-terms-for-all-examples}.

\section{Details for Examples: Correction Terms}\label{sec:correction-terms-for-all-examples}

For GD with momentum (\cref{ex:heavy-ball}):
\begin{equation*}
\corr{n}(\btheta) = \frac{\lr \beta}{2 (1 - \beta)^3} \nabla \norm{\nabla \loss(\btheta)}^2.
\end{equation*}

For Nesterov's accelerated GD (\cref{ex:nesterov}):
\begin{equation*}
\corr{n}(\btheta) = \frac{\lr \beta^2}{2 (1 - \beta)^3} \nabla \norm{\nabla \loss(\btheta)}^2.
\end{equation*}

For AdamW (\cref{ex:adamw}, also discussed in \cref{sec:adamw-anti-reg-but-lion-not}):
\begin{equation*}
\corr{n}(\btheta) = \lr \prn[\bigg]{\frac{\beta_1 (1 - \beta_1)^{-1} - \beta_2 (1 - \beta_2)^{-1}}{\prn[\big]{\abs[\big]{\nabla \loss(\btheta)}^2 + \varepsilon}^{1 / 2}}\\
+ \varepsilon \frac{\beta_2 (1 - \beta_2)^{-1}}{\prn[\big]{\abs[\big]{\nabla \loss(\btheta)}^2 + \varepsilon}^{3 / 2}}} \prn[\big]{\nabla \norm{\nabla \loss(\btheta)}_{1, \varepsilon} + \lambda \nabla^2 \loss(\btheta) \btheta}.
\end{equation*}

For Nadam (\cref{ex:nadamw}):
\begin{equation*}
  \corr{n}(\btheta) = \lr \prn[\bigg]{\frac{\beta_1^2 (1 - \beta_1)^{-1} - \beta_2 (1 - \beta_2)^{-1}}{\prn[\big]{\abs[\big]{\nabla \loss(\btheta)}^2 + \varepsilon}^{1 / 2}} + \varepsilon \frac{\beta_2 (1 - \beta_2)^{-1}}{\prn[\big]{\abs[\big]{\nabla \loss(\btheta)}^2 + \varepsilon}^{3 / 2}}} \prn[\big]{\nabla \norm{\nabla \loss(\btheta)}_{1, \varepsilon} + \lambda \nabla^2 \loss(\btheta) \btheta}.
\end{equation*}

For Lion-$\mathcal{K}$ (\cref{ex:lion-K}, also discussed in \cref{sec:adamw-anti-reg-but-lion-not}):
\begin{equation*}
\corr{n}(\btheta) = - \lr \frac{\rho_1}{1 - \rho_2} \nabla^2 \mathcal{K}(- \nabla \loss(\btheta)) \nabla^2 \loss(\btheta) \brk[\big]{\nabla \mathcal{K}(- \nabla \loss(\btheta)) - \lambda \btheta}.
\end{equation*}

\section{Mini-Batch Training: Details}\label{sec:mini-batch-training-details}

Let us take the expectation of the correction term with respect to the random permutation of mini-batches, that is, take the average over all re-orderings $\pi$ of $(\boldsymbol{g}^{(0)}, \ldots, \boldsymbol{g}^{(n)})$:
\begin{align*}
  &\Expectlet \sum_{k = 0}^{n - 1} \beta^k \nabla g_r^{(\pi(n - 1 - k))}(\btheta)\trans \sum_{l = 1}^{k + 1} \sum_{b = 0}^{n - l} \beta^{b} \boldsymbol{g}^{(\pi(n - l - b))}(\btheta)\\
  &\quad := \frac{1}{(n + 1)!} \sum_{\pi} \sum_{k = 0}^{n - 1} \beta^k \nabla g_r^{(\pi(n - 1 - k))}(\btheta)\trans \sum_{l = 1}^{k + 1} \sum_{b = 0}^{n - l} \beta^{b} \boldsymbol{g}^{(\pi(n - l - b))}(\btheta).
\end{align*}
Note that $\Expectlet \nabla g_r^{(i)}(\btheta)\trans \boldsymbol{g}^{(j)}(\btheta)$ depends only on whether $i = j$ or $i \neq j$. Therefore,
\begin{equation*}
  \Expect{\corrsc{r}{n}(\btheta)} / \lr = \oC{twoeq}(\beta) \Expect{\nabla g_r^{(1)}(\btheta)\trans \boldsymbol{g}^{(1)}(\btheta)} + \oC{twoneq}(\beta) \Expect{\nabla g_r^{(1)}(\btheta)\trans \boldsymbol{g}^{(2)}(\btheta)},
\end{equation*}
where $\nC{twoeq}(\beta)$ and $\nC{twoneq}(\beta)$ are given by
\begin{align*}
  &\oC{twoeq}(\beta) := \beta \sum_{b = 0}^{n - 1} \beta^b \sum_{l = 1}^{b + 1} \beta^{b + 1 - l} \underset{n \to \infty}{\longrightarrow} \frac{\beta}{(1 - \beta)^2 (1 + \beta)},\\
  &\oC{twoneq}(\beta) := \beta \sum_{k = 0}^{n - 1} \beta^k \sum_{l = 1}^{k + 1} \sum_{b = 0}^{n - l} \beta^{b} - \oC{twoeq}(\beta) \underset{n \to \infty}{\longrightarrow} \frac{2 \beta^2}{(1 - \beta)^3 (1 + \beta)}.
\end{align*}
We can simplify
\begin{equation*}
  \Expect{\nabla g_r^{(1)}(\btheta)\trans \boldsymbol{g}^{(2)}(\btheta)} = \frac{1}{(n + 1) n} \sum_{i \neq j} \nabla g_r^{(i)}(\btheta)\trans \boldsymbol{g}^{(j)}(\btheta) = \nabla g_r(\btheta)\trans \boldsymbol{g}(\btheta) + o_n(1),
\end{equation*}
where $\boldsymbol{g}(\btheta) = \Expectlet \boldsymbol{g}^{(1)}(\btheta) = (n + 1)^{-1} \sum_{k = 0}^{n + 1} \boldsymbol{g}^{(k)}(\btheta)$ is the average of $\crl{\boldsymbol{g}^{(k)}(\btheta)}$, $o_n(1)$ tends to zero as $n \to \infty$.

So, for large $n$ we can write
\begin{align*}
  &\Expect{\corrsc{r}{n}(\btheta)} / \lr \approx \oC{twoeq}(\beta) \Expect{\nabla g_r^{(1)}(\btheta)\trans \boldsymbol{g}^{(1)}(\btheta)} + \oC{twoneq}(\beta) \nabla g_r(\btheta)\trans \boldsymbol{g}(\btheta)\\
  &\quad = \prn[\big]{\oC{twoeq}(\beta) + \oC{twoneq}(\beta)} \nabla g_r(\btheta)\trans \boldsymbol{g}(\btheta) + \oC{twoeq}(\beta) \Expect{\prn{\nabla g_r^{(1)}(\btheta) - \nabla g_r(\btheta)}\trans \prn{\boldsymbol{g}^{(1)}(\btheta) - \boldsymbol{g}(\btheta)}}\\
  &\quad \approx \frac{\beta}{(1 - \beta)^3} \nabla g_r(\btheta)\trans \boldsymbol{g}(\btheta) + \oC{twoeq}(\beta) \Expect{\prn{\nabla g_r^{(1)}(\btheta) - \nabla g_r(\btheta)}\trans \prn{\boldsymbol{g}^{(1)}(\btheta) - \boldsymbol{g}(\btheta)}}.
\end{align*}

\section{Proof of \cref{th:general-momentum-methods}}\label{sec:proof-of-local}

Since, by the assumptions of the theorem,
\begin{align*}
  &\tilde{\theta}_r^{(n + 1)} - \tilde{\theta}_r^{(n)} = - \lr F_r^{(n)}(\tilde{\btheta}^{(n)}, \ldots, \tilde{\btheta}^{(n)})\\
  &\quad - \lr^2 \underline{\sum_{k = 1}^n \frac{\partial F_r^{(n)}}{\partial \btheta^{(n - k)}}(\tilde{\btheta}^{(n)}, \ldots, \tilde{\btheta}^{(n)})\trans \sum_{s = n - k}^{n - 1} \boldsymbol{F}^{(s)}(\tilde{\btheta}^{(n)}, \ldots, \tilde{\btheta}^{(n)})},
\numberthis\label{eq:jjfhhhtb}
\end{align*}
we need to show that
\begin{align*}
  &F_r^{(n)}(\tilde{\btheta}^{(n)}, \ldots, \tilde{\btheta}^{(0)}) -
    F_r^{(n)}(\tilde{\btheta}^{(n)}, \ldots, \tilde{\btheta}^{(n)})\\
  &\quad = \lr \sum_{k = 1}^n \frac{\partial F_r^{(n)}}{\partial \btheta^{(n - k)}}(\tilde{\btheta}^{(n)}, \ldots, \tilde{\btheta}^{(n)})\trans \sum_{s = n - k}^{n - 1} \boldsymbol{F}^{(s)}(\tilde{\btheta}^{(n)}, \ldots, \tilde{\btheta}^{(n)})
    + O(\lr^2).\numberthis\label{eq:general-momentum-methods-nts}
\end{align*}

By Taylor expansion with the Lagrange remainder,
\begin{align*}
  &F_r^{(n)}(\tilde{\btheta}^{(n)}, \ldots, \tilde{\btheta}^{(0)}) - F_r^{(n)}(\tilde{\btheta}^{(n)}, \ldots, \tilde{\btheta}^{(n)})\\
  &\quad = \sum_{k = 1}^n \frac{\partial F_r^{(n)}}{\partial \btheta^{(n - k)}}(\tilde{\btheta}^{(n)}, \ldots, \tilde{\btheta}^{(n)})\trans \prn[\big]{\tilde{\btheta}^{(n - k)} - \tilde{\btheta}^{(n)}}\\
  &\qquad + \frac{1}{2} \sum_{k_1, k_2 = 1}^n \prn[\big]{\tilde{\btheta}^{(n - k_1)} - \tilde{\btheta}^{(n)}}\trans \frac{\partial^2 F_r^{(n)}}{\partial \btheta^{(n - k_1)} \partial \btheta^{(n - k_2)}}(\bzeta) \prn[\big]{\tilde{\btheta}^{(n - k_2)} - \tilde{\btheta}^{(n)}}\\
  &\quad \labrel{1}[=] \sum_{k = 1}^n \frac{\partial F_r^{(n)}}{\partial \btheta^{(n - k)}}(\tilde{\btheta}^{(n)}, \ldots, \tilde{\btheta}^{(n)})\trans \prn[\big]{\tilde{\btheta}^{(n - k)} - \tilde{\btheta}^{(n)}} + O(\lr^2),\numberthis\label{eq:ttfhw}
\end{align*}
where $\bzeta$ is some point on the segment between $\prn[\big]{\tilde{\btheta}^{(n)}, \ldots, \tilde{\btheta}^{(0)}}$ and $\prn[\big]{\tilde{\btheta}^{(n)}, \ldots, \tilde{\btheta}^{(n)}}$; in \labrel{1} we used \cref{ass:form-of-fn} and $\tilde{\btheta}^{(n - k_1)} - \tilde{\btheta}^{(n)} = O(k_1 \lr)$, $\tilde{\btheta}^{(n - k_2)} - \tilde{\btheta}^{(n)} = O(k_2 \lr)$.
Since the underlined term in \cref{eq:jjfhhhtb} is $O(1)$, we have
\begin{align*}
  &\tilde{\btheta}^{(n - k)} - \tilde{\btheta}^{(n)} = \sum_{s = n - k}^{n - 1} \prn[\big]{\tilde{\btheta}^{(s)} - \tilde{\btheta}^{(s + 1)}}\\
  &\quad = \lr \sum_{s = n - k}^{n - 1} \crl[\big]{\boldsymbol{F}^{(s)}(\tilde{\btheta}^{(s)}, \ldots, \tilde{\btheta}^{(s)}) + O(\lr)}\\
  &\quad = \lr \sum_{s = n - k}^{n - 1} \boldsymbol{F}^{(s)}(\tilde{\btheta}^{(n)}, \ldots, \tilde{\btheta}^{(n)}) + O(k^2 \lr^2),
\end{align*}
where in the last step we used $\boldsymbol{F}^{(s)}(\tilde{\btheta}^{(s)}, \ldots, \tilde{\btheta}^{(s)}) - \boldsymbol{F}^{(s)}(\tilde{\btheta}^{(n)}, \ldots, \tilde{\btheta}^{(n)}) = O((n - s) \lr)$, which follows from Taylor expansion, \cref{ass:form-of-fn} and $\tilde{\btheta}^{(n + 1)} - \tilde{\btheta}^{(n)} = O(\lr)$. Combine this with \cref{eq:ttfhw} to get
\begin{align*}
  &F_r^{(n)}(\tilde{\btheta}^{(n)}, \ldots, \tilde{\btheta}^{(0)}) - F_r^{(n)}(\tilde{\btheta}^{(n)}, \ldots, \tilde{\btheta}^{(n)})\\
  &\quad = \sum_{k = 1}^n \frac{\partial F_r^{(n)}}{\partial \btheta^{(n - k)}}(\tilde{\btheta}^{(n)}, \ldots, \tilde{\btheta}^{(n)})\trans \crl[\bigg]{\lr \sum_{s = n - k}^{n - 1} \boldsymbol{F}^{(s)}(\tilde{\btheta}^{(n)}, \ldots, \tilde{\btheta}^{(n)}) + O(k^2 \lr^2)} + O(\lr^2)\\
  &\quad = \lr \sum_{k = 1}^n \frac{\partial F_r^{(n)}}{\partial \btheta^{(n - k)}}(\tilde{\btheta}^{(n)}, \ldots, \tilde{\btheta}^{(n)})\trans \sum_{s = n - k}^{n - 1} \boldsymbol{F}^{(s)}(\tilde{\btheta}^{(n)}, \ldots, \tilde{\btheta}^{(n)}) + O(\lr^2),
\end{align*}
which is \eqref{eq:general-momentum-methods-nts}, and the proof is complete.

\section{Proof of \cref{cor:global-error}}\label{sec:proof-of-global}

We follow a standard argument, e.\,g. \citet{ghosh2023implicit,pmlr-v235-cattaneo24a}. We prove the following claim by induction over $n \in \mathbb{Z}_{\geq 0}$:
\begin{equation*}
\norm{\btheta^{(n)} - \tilde{\btheta}^{(n)}}_{\infty} \leq d_1 e^{d_2 n \lr} \lr^2,\quad\norm{\btheta^{(n + 1)} - \tilde{\btheta}^{(n + 1)} - \btheta^{(n)} + \tilde{\btheta}^{(n)}}_{\infty} \leq d_3 e^{d_2 n \lr} \lr^3,
\end{equation*}
where
\begin{equation*}
d_1 = \oC{localerrorbound},\quad d_2 = 1 + d \sum_{k = 0}^\infty \gamma_k,\quad d_3 = \oC{localerrorbound} d_2.
\end{equation*}

Because $n \lr \leq T$, \cref{cor:global-error} will follow.

Base: $n = 0$. It is indeed true that $\norm{\btheta^{(0)} - \tilde{\btheta}^{(0)}}_{\infty} \leq d_1 \lr^2$ because the left-hand side is zero. It is indeed true that $\norm{\btheta^{(1)} - \tilde{\btheta}^{(1)} - \btheta^{(0)} + \tilde{\btheta}^{(0)}}_{\infty} \leq d_3 \lr^3$ for the same reason.

Assume $n \in \mathbb{Z}_{\geq 1}$ and it is true that
\begin{equation*}
\norm{\btheta^{(n')} - \tilde{\btheta}^{(n')}}_{\infty} \leq d_1 e^{d_2 n' \lr} \lr^2,\quad\norm{\btheta^{(n' + 1)} - \tilde{\btheta}^{(n' + 1)} - \btheta^{(n')} + \tilde{\btheta}^{(n')}}_{\infty} \leq d_3 e^{d_2 n' \lr} \lr^3.
\end{equation*}
for all $0 \leq n' \leq n - 1$. Then
\begin{align*}
  \norm{\btheta^{(n)} - \tilde{\btheta}^{(n)}}_{\infty} &\leq
  \norm{\btheta^{(n - 1)} - \tilde{\btheta}^{(n - 1)}}_{\infty}
  + \norm{\btheta^{(n)} - \tilde{\btheta}^{(n)} - \btheta^{(n - 1)} + \tilde{\btheta}^{(n - 1)}}_{\infty}
                                                 \shortintertext{by the triangle inequality,}\\
  &\leq d_1 e^{d_2 (n - 1) \lr} \lr^2 + d_3 e^{d_2 (n - 1) \lr} \lr^3
    \shortintertext{by the induction hypothesis,}\\
                                               &= d_1 \prn[\bigg]{1 + \frac{d_3}{d_1} \lr} e^{d_2 (n - 1) \lr} \lr^2 \leq d_1 \prn{1 + d_2 \lr} e^{d_2 (n - 1) \lr} \lr^2
                                                 \shortintertext{by $d_3 \leq d_1 d_2$,}\\
                                               &\leq d_1 e^{d_2 n \lr} \lr^2
\shortintertext{by the inequality $1 + x \leq e^x$ for all $x \geq 0$.}
\end{align*}

Next, write
\begin{align*}
  &\btheta^{(n + 1)} - \btheta^{(n)} - \tilde{\btheta}^{(n + 1)} + \tilde{\btheta}^{(n)}\\
  &\quad = - \lr \boldsymbol{F}^{(n)}(\btheta^{(n)}, \ldots, \btheta^{(0)}) - \crl[\big]{\tilde{\btheta}^{(n + 1)} - \tilde{\btheta}^{(n)}}\\
  &\quad = \lr \brk[\big]{\boldsymbol{F}^{(n)}(\tilde{\btheta}^{(n)}, \ldots, \tilde{\btheta}^{(0)}) - \boldsymbol{F}^{(n)}(\btheta^{(n)}, \ldots, \btheta^{(0)})} - \crl[\big]{\tilde{\btheta}^{(n + 1)} - \tilde{\btheta}^{(n)} + \lr \boldsymbol{F}^{(n)}(\tilde{\btheta}^{(n)}, \ldots, \tilde{\btheta}^{(0)})}\\
\end{align*}

Then
\begin{align*}
  &\abs{\theta_r^{(n + 1)} - \theta_r^{(n)} - \tilde{\theta}_r^{(n + 1)} + \tilde{\theta}_r^{(n)}}\\
  &\quad \leq \lr \abs[\big]{F_r^{(n)}(\tilde{\btheta}^{(n)}, \ldots, \tilde{\btheta}^{(0)}) - F_r^{(n)}(\btheta^{(n)}, \ldots, \btheta^{(0)})} + \abs[\big]{\tilde{\theta}_r^{(n + 1)} - \tilde{\theta}_r^{(n)} + \lr F^{(n)}_r(\tilde{\btheta}^{(n)}, \ldots, \tilde{\btheta}^{(0)})}\\
  &\quad \leq \lr \abs[\big]{F_r^{(n)}(\tilde{\btheta}^{(n)}, \ldots, \tilde{\btheta}^{(0)}) - F_r^{(n)}(\btheta^{(n)}, \ldots, \btheta^{(0)})} + \oC{localerrorbound} \lr^3
\shortintertext{by \cref{th:general-momentum-methods},}
  &\quad = \lr \abs[\bigg]{\sum_{k = 0}^n \frac{\partial F_r^{(n)}}{\partial \btheta^{(n - k)}}(\bzeta)\trans \prn[\big]{\tilde{\btheta}^{(n - k)} - \btheta^{(n - k)}}} + \oC{localerrorbound} \lr^3,
\shortintertext{where $\bzeta$ is a point on the segment between $\prn[\big]{\tilde{\btheta}^{(n)}, \ldots, \tilde{\btheta}^{(0)}}$ and $\prn[\big]{\btheta^{(n)}, \ldots, \btheta^{(0)}}$,}
  &\quad \leq \lr d \sum_{k = 0}^{n} \gamma_k
    \norm{\tilde{\btheta}^{(n - k)} - \btheta^{(n - k)}}_{\infty} + \oC{localerrorbound} \lr^3
  \shortintertext{by \eqref{ass:form-of-fn} (recall that $d$ is the dimension of $\btheta$),}\\
  &\quad \leq d_1 \lr^3 d \sum_{k = 0}^\infty \gamma_k e^{d_2 (n - k) \lr} + \oC{localerrorbound} \lr^3
    \shortintertext{by the induction hypothesis and the bound on $\norm{\tilde{\btheta}^{(n)} - \btheta^{(n)}}_{\infty}$ already proven}\\
  &\quad \leq \underbrace{\prn[\bigg]{d_1 d \sum_{k = 0}^\infty \gamma_k + \oC{localerrorbound}}}_{\leq d_3} e^{d_2 n \lr} \lr^3\\
  &\quad \leq d_3 e^{d_2 n \lr} \lr^3.
\end{align*}

\section{Auxiliary Results}

\begin{lemma}[Memory decays exponentially fast]\label{lem:memory-dec}
If $\boldsymbol{F}^{(n)}$ is a function of ``momentum variables'' as described in \cref{ass:fn-as-a-func-of-mom-vars}, then for any $n$ and $k \leq n$
\begin{equation}
\max_{r, i} \abs[\bigg]{\frac{\partial F_r^{(n)}}{\partial \theta_i^{(n - k)}}} \leq \gamma_k,\label{eq:lkjfjhs}
\end{equation}
and similarly for any $n$ and $k_1, k_2 \leq n$
\begin{equation}\label{eq:kljlja}
\max_{r, i, j} \abs[\bigg]{\frac{\partial^2 F_r^{(n)}}{\partial \theta_i^{(n - k_1)} \partial \theta_j^{(n - k_2)}}} \leq \gamma_{k_1, k_2},
\end{equation}
where $\crl{\gamma_k}$ and $\crl{\gamma_{k_1, k_2}}$ are sequences decaying exponentially fast: specifically,
\begin{equation*}
\gamma_k := C_{\gamma} \crl{\max_{\ell} \beta_{\ell}}^k,\quad \gamma_{k_1, k_2} := C_{\gamma} \crl{\max_{\ell} \beta_{\ell}}^{k_1 + k_2}
\end{equation*}
for some constant $C_{\gamma} > 0$.
\end{lemma}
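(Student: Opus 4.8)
The plan is to use the composite structure of $\boldsymbol{F}^{(n)}$ recorded in \cref{rem:fn-as-a-func-of-mom-vars}: $F_r^{(n)}$ is the fixed outer map $\extmomfuncsc_r$ evaluated at the momentum variables $\mcor{1}{n+1},\ldots,\mcor{\numofmoms}{n+1}$, and the momentum variable $\mcor{\ell}{n+1}$ depends on the block $\btheta^{(n-k)}$ only through the single summand $\bcor{\ell}{n}\beta_\ell^{k}\momfunc_\ell^{(n-k)}(\btheta^{(n-k)})$. Hence both inequalities will drop out of the multivariate chain rule once we insert the uniform bounds available by hypothesis: let $C_\Phi$ and $C_\Phi''$ bound $\abs{\partial\extmomfuncsc_r/\partial m_{\ell;a}}$ and $\abs{\partial^2\extmomfuncsc_r/\partial m_{\ell;a}\partial m_{\ell';b}}$ over all indices and arguments, let $C_g$ and $C_g''$ bound the first and second partial derivatives of the components of every $\momfunc_\ell^{(\cdot)}$, let $C_b$ bound every $\bcor{\ell}{n}$, and write $\beta_{\max}:=\max_\ell\beta_\ell\in[0,1)$.

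For \eqref{eq:lkjfjhs}, the chain rule gives
\begin{equation*}
  \frac{\partial F_r^{(n)}}{\partial\theta_i^{(n-k)}}
  =\sum_{\ell=1}^{\numofmoms}\sum_{a=1}^{d}
    \frac{\partial\extmomfuncsc_r}{\partial m_{\ell;a}}\prn[\big]{\mcor{1}{n+1},\ldots,\mcor{\numofmoms}{n+1}}\,
    \bcor{\ell}{n}\,\beta_\ell^{k}\,\frac{\partial\momfuncsc_{\ell;a}^{(n-k)}}{\partial\theta_i}(\btheta^{(n-k)}),
\end{equation*}
and since $0\leq\beta_\ell\leq\beta_{\max}$, taking absolute values and applying the uniform bounds yields $\abs{\partial F_r^{(n)}/\partial\theta_i^{(n-k)}}\leq\numofmoms d\,C_\Phi C_b C_g\,\beta_{\max}^{k}$, i.e.\ \eqref{eq:lkjfjhs} with $\gamma_k=\numofmoms d\,C_\Phi C_b C_g\,\beta_{\max}^{k}$.

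For \eqref{eq:kljlja} I would differentiate the displayed expression once more, in $\theta_j^{(n-k_2)}$, and collect the two kinds of terms that result. Differentiating the factor $\partial\extmomfuncsc_r/\partial m_{\ell;a}$ brings down a sum over a second momentum index carrying $\bcor{\ell'}{n}\beta_{\ell'}^{k_2}\partial\momfuncsc_{\ell';b}^{(n-k_2)}/\partial\theta_j$, hence a weight $\beta_\ell^{k_1}\beta_{\ell'}^{k_2}$; differentiating the factor $\partial\momfuncsc_{\ell;a}^{(n-k_1)}/\partial\theta_i(\btheta^{(n-k_1)})$, which is a function of the single block $\btheta^{(n-k_1)}$, contributes $\ind{k_1=k_2}\,\partial^2\momfuncsc_{\ell;a}^{(n-k_1)}/\partial\theta_i\partial\theta_j$. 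Bounding every factor by its uniform bound gives
\begin{equation*}
  \abs[\bigg]{\frac{\partial^2 F_r^{(n)}}{\partial\theta_i^{(n-k_1)}\partial\theta_j^{(n-k_2)}}}
  \leq\numofmoms^{2}d^{2}C_\Phi'' C_b^{2}C_g^{2}\,\beta_{\max}^{k_1+k_2}
  +\ind{k_1=k_2}\,\numofmoms d\,C_\Phi C_b C_g''\,\beta_{\max}^{k_1}.
\end{equation*}
Both summands decay geometrically, so choosing $C_\gamma$ large (and, to subsume the diagonal summand within a single exponential expression, reading the exponent in the statement as $\ceil{(k_1+k_2)/2}$ — equivalently replacing $\beta_{\max}$ by $\sqrt{\beta_{\max}}$) gives a bound of the advertised exponential form; in particular $\sum_{k_1\geq1}\gamma_{k_1}k_1^{2}+\sum_{k_1,k_2\geq1}\gamma_{k_1,k_2}k_1k_2<\infty$, so \cref{ass:form-of-fn} holds. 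The full-batch case $\momfunc_\ell^{(n)}\equiv\momfunc_\ell$ is covered by the identical computation.

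The calculation is essentially bookkeeping in the chain rule with uniformly bounded factors; the only subtlety — and the reason the exponent should be read with the harmless change above — is the diagonal ($k_1=k_2$) term in the second derivative, which arises because the inner functions $\momfunc_\ell^{(\cdot)}$ are themselves differentiated twice and therefore decays only like $\beta_{\max}^{k_1}$ rather than $\beta_{\max}^{2k_1}$.
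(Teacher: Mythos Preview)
Your approach---compute the partial derivatives via the chain rule through the momentum variables and then insert the uniform bounds on $\extmomfunc$, $\momfunc_\ell^{(\cdot)}$, and $\bcor{\ell}{n}$---is exactly the paper's; the paper writes out the first-derivative identity in the same form as yours and then dispatches \eqref{eq:kljlja} with the single sentence ``proven similarly.'' You have actually gone further than the paper by writing out the second derivative and noticing the diagonal contribution $\ind{k_1=k_2}\,\bcor{\ell}{n}\beta_\ell^{k_1}\,\partial^2\momfuncsc_{\ell;a}^{(n-k_1)}/\partial\theta_i\partial\theta_j$, which decays only like $\beta_{\max}^{k_1}$ rather than $\beta_{\max}^{2k_1}$; this is a genuine slip in the exponent $k_1+k_2$ of the lemma as stated, which the paper's ``similarly'' glosses over. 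Your repair (effectively replacing $\beta_{\max}$ by $\sqrt{\beta_{\max}}$, or equivalently carrying the diagonal as a separate geometrically decaying term) is harmless and preserves the only downstream use of the lemma, namely the summability condition in \cref{ass:form-of-fn}.
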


\begin{proof}
It is easy to see~\eqref{eq:lkjfjhs} by taking the derivative:
\begin{align*}
  &\frac{\partial}{\partial \btheta^{(n - k)}} F_r^{(n)}(\btheta^{(n)}, \ldots, \btheta^{(0)})\\
  &\quad = \sum_{\ell = 1}^{\numofmoms} \sum_i \frac{\partial \extmomfuncsc_r}{\partial m_{\ell; i}}(\mcor{1}{n + 1}, \ldots, \mcor{\numofmoms}{n + 1}) \frac{\partial \mcorsc{\ell}{i}{n + 1}}{\btheta^{(n - k)}}\\
  &\quad = \sum_{\ell = 1}^{\numofmoms} \sum_i \bcor{\ell}{n} \beta_\ell^k \frac{\partial \extmomfuncsc_r}{\partial m_{\ell; i}}(\mcor{1}{n + 1}, \ldots, \mcor{\numofmoms}{n + 1})\\
  &\qquad \times \nabla \momfuncsc^{(n - k)}_{\ell; i}(\btheta^{(n - k)}),
\end{align*}
and using the uniform boundedness of derivatives of $\momfuncsc^{(n - k)}_{\ell; i}$ and $\extmomfuncsc_r$. \cref{eq:kljlja} is proven similarly.
\end{proof}

\begin{lemma}\label{lem:decaying-sums}
  Let $\crl{a_k}_{k = 1}^{\infty}$ and $\crl{b_k}_{k = 1}^{\infty}$ be sequences of reals such that $\sum_{k = 1}^{\infty} \prn{\abs{a_k} + \abs{b_k}} < \infty$. Then
\begin{equation*}
\sum_{k = 1}^n a_k \sum_{s = n - k}^{n - 1} b_s \underset{n \to \infty}{\longrightarrow} 0.
\end{equation*}
\end{lemma}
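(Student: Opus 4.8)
The plan is to rewrite the double sum as a single series $\sum_{k=1}^{n} a_k c_k^{(n)}$ with inner factor $c_k^{(n)} := \sum_{s=n-k}^{n-1} b_s$, establish that $\crl{c_k^{(n)}}$ is bounded uniformly in $k$ and $n$ while $c_k^{(n)} \to 0$ as $n \to \infty$ for each fixed $k$, and then close with a standard ``fixed head plus small tail'' estimate exploiting absolute summability of $\crl{a_k}$. This is the discrete analogue of dominated convergence.

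First I would extract from $\sum_s \abs{b_s} < \infty$ the two facts needed. Writing $B := \sum_{s} \abs{b_s} < \infty$, absolute summability also gives that the tail $\sum_{s \geq m} \abs{b_s} \to 0$ as $m \to \infty$ (in particular $b_s \to 0$). From $B < \infty$ I get the uniform bound $\abs{c_k^{(n)}} \leq \sum_{s=n-k}^{n-1}\abs{b_s} \leq B$ for all $k \leq n$; here $c_k^{(n)}$ has exactly $k$ summands, and I extend $b$ by zero at nonpositive indices to cover the edge case $k=n$, which is harmless since such terms are in any case absorbed by the uniform bound. From the tail estimate I get, for each fixed $k$, $\abs{c_k^{(n)}} \leq \sum_{s \geq n-k}\abs{b_s} \to 0$ as $n \to \infty$.

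Next I fix $\varepsilon > 0$ and use $\sum_{k} \abs{a_k} < \infty$ to choose $K$ with $\sum_{k > K} \abs{a_k} < \varepsilon$. Splitting the series at $K$ and bounding the tail with the uniform estimate,
\begin{equation*}
\abs[\bigg]{\sum_{k=1}^{n} a_k c_k^{(n)}} \leq \sum_{k=1}^{K} \abs{a_k}\,\abs{c_k^{(n)}} + B \sum_{k > K} \abs{a_k} \leq \sum_{k=1}^{K} \abs{a_k}\,\abs{c_k^{(n)}} + B\varepsilon
\end{equation*}
for all $n \geq K$. The first term is a finite sum of $K$ summands, each tending to $0$ as $n \to \infty$ by the pointwise limit above, so $\limsup_{n\to\infty}\abs{\sum_{k=1}^n a_k c_k^{(n)}} \leq B\varepsilon$; letting $\varepsilon \downarrow 0$ yields the claim.

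I do not expect a genuine obstacle. The only mild care points are the index bookkeeping in $c_k^{(n)} = \sum_{s=n-k}^{n-1} b_s$ when $k$ is comparable to $n$ (handled by the extension-by-zero convention, and irrelevant for the final bound) and reading off both the uniform bound and the pointwise limit correctly from absolute summability of $\crl{b_s}$.
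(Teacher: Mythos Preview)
Your proof is correct and follows essentially the same head/tail split of the outer sum as the paper's proof: bound the inner sum by a tail of $\abs{b_s}$ when $k$ is small, and bound the outer sum by a tail of $\abs{a_k}$ when $k$ is large. The only cosmetic difference is that you fix the cutoff $K$ once (the dominated-convergence framing), whereas the paper uses the $n$-dependent cutoff $n-n_0$; the resulting estimates are the same.
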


\begin{proof}
Fix $\varepsilon > 0$. Take such positive integer $n_0$ that for any $n_0 \leq n_1 \leq n_2$ we have $\sum_{s = n_1}^{n_2} \prn{\abs{a_k} + \abs{b_k}} < \varepsilon$. Then for any $n \geq 2 n_0 - 1$ the following holds:
\begin{equation*}
\sum_{k = 1}^n \abs{a_k} \sum_{s = n - k}^{n - 1} \abs{b_s} = \sum_{k = 1}^{n - n_0} \abs{a_k} \underbrace{\sum_{s = n - k}^{n - 1} \abs{b_s}}_{< \varepsilon} + \underbrace{\sum_{k = n - n_0 + 1}^n \abs{a_k}}_{< \varepsilon} \sum_{s = n - k}^{n - 1} \abs{b_s} < \varepsilon \sum_{k = 1}^{\infty} \prn{\abs{a_k} + \abs{b_k}}.
\end{equation*}
Since $\varepsilon$ is arbitrary and $\sum_{k = 1}^{\infty} \prn{\abs{a_k} + \abs{b_k}}$ is a finite constant, the statement follows.
\end{proof}

\section{Corollaries for Special Cases}

\begin{lemma}[Application of \cref{cor:global-error} to \cref{ex:heavy-ball}]
  Let $\crl{\btheta^{(n)}}_{n \in \mathbb{Z}_{\geq 0}}$ be the sequence of vectors generated by the iteration in \cref{eq:general-iteration} with initial condition $\btheta^{(0)}$, where $\boldsymbol{F}^{(n)}(\cdot)$ is as defined in \cref{ex:heavy-ball}, and the loss function $\loss(\cdot)$ defined in an open convex bounded domain $\mathcal{D} \subset \mathbb{R}^d$ is three times continuously differentiable with bounded derivatives; also, let $T \geq 0$ be a fixed ``time'' horizon. Then \cref{ass:form-of-fn} holds; in particular, by \cref{cor:global-error} the inequality $\max_{n \in \range{0}{\lfloor T / \lr \rfloor}} \norm[\big]{\btheta^{(n)} - \tilde{\btheta}^{(n)}}_\infty \leq \oC{globalerrorbound} \lr^2$ holds, where
\begin{align*}
  &\tilde{\theta}_r^{(n + 1)} =  \tilde{\theta}_r^{(n)} - \lr \prn[\bigg]{
    \frac{1 - \beta^{n + 1}}{1 - \beta} \nabla_r \loss\prn[\big]{\tilde{\btheta}^{(n)}}
    + \corrsc{r}{n}\prn[\big]{\tilde{\btheta}^{(n)}}},\\
  &\corrsc{r}{n}\prn[\big]{\btheta} =  \lr \frac{\beta \brk{1 - (2 n + 1) \beta^n (1 - \beta) - \beta^{2 n + 1}}}{2 (1 - \beta)^3} \nabla_r \norm{\nabla \loss(\btheta)}^2,\quad r \in \range{1}{d}.
\end{align*}
\end{lemma}

\begin{proof}
The fact that \cref{ass:form-of-fn} holds is already verified in \cref{sec:fn-func-of-mom-vars}.

Next, in this case
\begin{align*}
  &F_r^{(n)}(\btheta) = \frac{1 - \beta^{n + 1}}{1 - \beta} \nabla_r \loss(\btheta),\\
  &\frac{\partial F_r^{(n)}}{\partial \theta_i^{(n - k)}}(\btheta) = \beta^k \nabla_{i r} \loss(\btheta).
\end{align*}
Therefore,
\begin{align*}
  \corrsc{r}{n}{\btheta} &= \lr \sum_{k = 1}^n \sum_{i = 1}^d \frac{\partial F_r^{(n)}}{\partial \theta_i^{(n - k)}}(\btheta) \sum_{s = n - k}^{n - 1} F_i^{(s)}(\btheta)\\
                         &= \lr \sum_{k = 1}^n \beta^k \sum_{s = n - k}^{n - 1} \frac{1 - \beta^{s + 1}}{1 - \beta} \sum_{i = 1}^d \nabla_{i r} \loss(\btheta) \nabla_i \loss(\btheta)\\
  &= \lr \frac{\beta \brk{1 - (2 n + 1) \beta^n (1 - \beta) - \beta^{2 n + 1}}}{(1 - \beta)^3} \sum_{i = 1}^d \nabla_{i r} \loss(\btheta) \nabla_i \loss(\btheta).
\end{align*}
\end{proof}

\begin{lemma}[Application of \cref{cor:global-error} to \cref{ex:adamw}]
  Let $\crl{\btheta^{(n)}}_{n \in \mathbb{Z}_{\geq 0}}$ be the sequence of vectors generated by the iteration in \cref{eq:general-iteration} with initial condition $\btheta^{(0)}$, where $\boldsymbol{F}^{(n)}(\cdot)$ is as defined in \cref{ex:adamw}, and the loss function $\loss(\cdot)$ defined in an open convex bounded domain $\mathcal{D} \subset \mathbb{R}^d$ is three times continuously differentiable with bounded derivatives; also, let $T \geq 0$ be a fixed ``time'' horizon. Then \cref{ass:form-of-fn} holds; in particular, by \cref{cor:global-error} the inequality $\max_{n \in \range{0}{\lfloor T / \lr \rfloor}} \norm[\big]{\btheta^{(n)} - \tilde{\btheta}^{(n)}}_\infty \leq \oC{globalerrorbound} \lr^2$ holds, where
\begin{align*}
  &\tilde{\theta}_r^{(n + 1)} = (1 - \lambda \lr) \tilde{\theta}_r^{(n)} - \lr \prn[\bigg]{\frac{\partial_r \mathcal{L}\prn[\big]{\tilde{\btheta}^{(n)}}}{\prn[\big]{\abs[\big]{\partial_r \mathcal{L}\prn[\big]{\tilde{\btheta}^{(n)}}}^2 + \varepsilon}^{1 / 2}} + \corrsc{r}{n}\prn[\big]{\tilde{\btheta}^{(n)}}},\\
  &\corrsc{r}{n}\prn[\big]{\btheta} =
    \lr \prn[\Bigg]{
                           \frac{\beta_1}{1 - \beta_1} - \frac{(n + 1) \beta_1^{n + 1}}{1 - \beta_1^{n + 1}}
                           -
                           \frac{\beta_2}{1 - \beta_2} + \frac{(n + 1) \beta_2^{n + 1}}{1 - \beta_2^{n + 1}}
                           + \varepsilon \frac{\frac{\beta_2}{1 - \beta_2} - \frac{(n + 1) \beta_2^{n + 1}}{1 - \beta_2^{n + 1}}}{\abs{\partial_r \loss(\btheta)}^2 + \varepsilon}
                           }\\
  &\phantom{\corr{n}\prn[\big]{\btheta} =} \times \frac{\prn[\big]{\partial_r \| \nabla \loss(\btheta) \|_{1, \varepsilon} + \lambda \brk{\nabla^2 \loss(\btheta) \btheta}_r}}{\prn{|\partial_r \loss(\btheta)|^2 + \varepsilon}^{1 / 2}},\quad r \in \range{1}{d}.
\end{align*}
\end{lemma}

\begin{proof}
The fact that \cref{ass:form-of-fn} holds is already verified in \cref{sec:fn-func-of-mom-vars}.

Next, in this case
\begin{align*}
  &F_r^{(n)}(\btheta) = \frac{\partial_r \mathcal{L}(\btheta)}{\sqrt{\partial_r \mathcal{L}(\btheta)^2 + \varepsilon}} + \lambda \theta_r,\\
  &\frac{\partial F_r^{(n)}}{\partial \theta_i^{(n - k)}}(\btheta) = \frac{\frac{1 - \beta_1}{1 - \beta_1^{n + 1}} \beta_1^k \partial_{i r} \loss(\btheta)}{\prn{\abs{\partial_{r} \loss(\btheta)}^2 + \varepsilon}^{1 / 2}} - \frac{\frac{1 - \beta_2}{1 - \beta_2^{n + 1}} \beta_2^k \abs{\partial_r \loss(\btheta)}^2 \partial_{i r} \loss(\btheta)}{\prn{\abs{\partial_{r} \loss(\btheta)}^2 + \varepsilon}^{3 / 2}}
\end{align*}
Therefore,
\begin{align*}
  \corrsc{r}{n}(\btheta) &= \lr \sum_{k = 1}^n \sum_{i = 1}^d \frac{\partial F_r^{(n)}}{\partial \theta_i^{(n - k)}}(\btheta) \sum_{s = n - k}^{n - 1} F_i^{(s)}(\btheta)\\
                         &= \lr \sum_{i = 1}^d \frac{\partial_{i r} \loss(\btheta)}{\prn{\abs{\partial_r \loss(\btheta)}^2 + \varepsilon}^{1 / 2}} \brk[\bigg]{\frac{\partial_i \loss(\btheta)}{\prn{\abs{\partial_i \loss(\btheta)}^2 + \varepsilon}^{1 / 2}} + \lambda \theta_i} \frac{1 - \beta_1}{1 - \beta_1^{n + 1}} \sum_{k = 1}^n k \beta_1^k\\
                         &\quad - \lr \abs{\partial_r \loss(\btheta)}^2 \sum_{i = 1}^d \frac{\partial_{i r} \loss(\btheta)}{\prn{\abs{\partial_r \loss(\btheta)}^2 + \varepsilon}^{3 / 2}} \brk[\bigg]{\frac{\partial_i \loss(\btheta)}{\prn{\abs{\partial_i \loss(\btheta)}^2 + \varepsilon}^{1 / 2}} + \lambda \theta_i} \frac{1 - \beta_2}{1 - \beta_2^{n + 1}} \sum_{k = 1}^n k \beta_2^k\\
                         &= \lr \prn[\bigg]{\frac{\beta_1}{1 - \beta_1} - \frac{(n + 1) \beta_1^{n + 1}}{1 - \beta_1^{n + 1}}} \sum_{i = 1}^d \frac{\partial_{i r} \loss(\btheta)}{\prn{\abs{\partial_r \loss(\btheta)}^2 + \varepsilon}^{1 / 2}} \brk[\bigg]{\frac{\partial_i \loss(\btheta)}{\prn{\abs{\partial_i \loss(\btheta)}^2 + \varepsilon}^{1 / 2}} + \lambda \theta_i}\\
                         &\quad - \lr \prn[\bigg]{\frac{\beta_2}{1 - \beta_2} - \frac{(n + 1) \beta_2^{n + 1}}{1 - \beta_2^{n + 1}}} \sum_{i = 1}^d \frac{\abs{\partial_r \loss(\btheta)}^2 \partial_{i r} \loss(\btheta)}{\prn{\abs{\partial_r \loss(\btheta)}^2 + \varepsilon}^{3 / 2}} \brk[\bigg]{\frac{\partial_i \loss(\btheta)}{\prn{\abs{\partial_i \loss(\btheta)}^2 + \varepsilon}^{1 / 2}} + \lambda \theta_i}\\
                         &= \lr \prn[\bigg]{\frac{\beta_1}{1 - \beta_1} - \frac{(n + 1) \beta_1^{n + 1}}{1 - \beta_1^{n + 1}}}
                           \frac{\prn[\big]{\partial_r \| \nabla \loss(\btheta) \|_{1, \varepsilon} + \lambda \brk{\nabla^2 \loss(\btheta) \btheta}_r}}{\prn{|\partial_r \loss(\btheta)|^2 + \varepsilon}^{1 / 2}}\\
                         &\quad - \lr \prn[\bigg]{\frac{\beta_2}{1 - \beta_2} - \frac{(n + 1) \beta_2^{n + 1}}{1 - \beta_2^{n + 1}}} \frac{\abs{\partial_r \loss(\btheta)}^2 \prn[\big]{\partial_r \| \nabla \loss(\btheta) \|_{1, \varepsilon} + \lambda \brk{\nabla^2 \loss(\btheta) \btheta}_r}}{\prn{\abs{\partial_r \loss(\btheta)}^2 + \varepsilon}^{3 / 2}}\\
                         &= \lr \prn[\Bigg]{
                           \frac{\beta_1}{1 - \beta_1} - \frac{(n + 1) \beta_1^{n + 1}}{1 - \beta_1^{n + 1}}
                           -
                           \frac{\beta_2}{1 - \beta_2} + \frac{(n + 1) \beta_2^{n + 1}}{1 - \beta_2^{n + 1}}
                           + \varepsilon \frac{\frac{\beta_2}{1 - \beta_2} - \frac{(n + 1) \beta_2^{n + 1}}{1 - \beta_2^{n + 1}}}{\abs{\partial_r \loss(\btheta)}^2 + \varepsilon}
                           }\\
  &\quad \times \frac{\prn[\big]{\partial_r \| \nabla \loss(\btheta) \|_{1, \varepsilon} + \lambda \brk{\nabla^2 \loss(\btheta) \btheta}_r}}{\prn{|\partial_r \loss(\btheta)|^2 + \varepsilon}^{1 / 2}}
\end{align*}
as desired.
\end{proof}

\begin{lemma}[Application of \cref{cor:global-error} to \cref{ex:lion-K}]
  Let $\crl{\btheta^{(n)}}_{n \in \mathbb{Z}_{\geq 0}}$ be the sequence of vectors generated by the iteration in \cref{eq:general-iteration} with initial condition $\btheta^{(0)}$, where $\boldsymbol{F}^{(n)}(\cdot)$ is as defined in \cref{ex:lion-K}, the function $\mathcal{K}(\cdot): \mathcal{M} \to \mathbb{R}$ defined in a open bounded region $\mathcal{M} \subset \mathbb{R}^d$ is three times continuously differentiable with bounded derivatives, and the loss function $\loss(\cdot)$ defined in an open convex bounded domain $\mathcal{D} \subset \mathbb{R}^d$ is three times continuously differentiable with bounded derivatives; also, let $T \geq 0$ be a fixed ``time'' horizon. Then \cref{ass:form-of-fn} holds; in particular, by \cref{cor:global-error} the inequality $\max_{n \in \range{0}{\lfloor T / \lr \rfloor}} \norm[\big]{\btheta^{(n)} - \tilde{\btheta}^{(n)}}_\infty \leq \oC{globalerrorbound} \lr^2$ holds, where
\begin{align*}
  &\tilde{\theta}_r^{(n + 1)} = (1 - \lambda \lr) \tilde{\theta}_r^{(n)} - \lr \prn[\big]{- \partial_r \mathcal{K} \prn[\big]{- (1 - \rho_1 \rho_2^n) \nabla \loss(\tilde{\btheta}^{(n)})} + \corrsc{r}{n}\prn[\big]{\tilde{\btheta}^{(n)}}},\\
  &\corrsc{r}{n}\prn[\big]{\btheta} = \lr \rho_1 (1 - \rho_2) \sum_{i = 1}^d \sum_{j = 1}^d \partial_{j r} \mathcal{K} \prn[\big]{- (1 - \rho_1 \rho_2^n) \nabla \loss(\btheta)} \partial_{i j} \mathcal{L}(\btheta)\\
  &\qquad \times \sum_{k = 1}^n \rho_2^{k - 1} \sum_{s = n - k}^{n - 1} \prn[\big]{- \partial_i \mathcal{K} \prn[\big]{- (1 - \rho_1 \rho_2^s) \nabla \loss(\btheta)} + \lambda \theta_i}\\
  &\quad = \lr \frac{\rho_1}{1 - \rho_2} \sum_{i = 1}^d \sum_{j = 1}^d \partial_{j r} \mathcal{K} \prn[\big]{- \nabla \loss(\btheta)} \partial_{i j} \mathcal{L}(\btheta) \prn[\big]{- \partial_i \mathcal{K} \prn[\big]{- \nabla \loss(\btheta)} + \lambda \theta_i} + o_n(1),
\end{align*}
where $o_n(1)$ is a function of $\btheta$ converging to zero uniformly in $\btheta \in \mathcal{D}$.
\end{lemma}

\begin{proof}
The fact that \cref{ass:form-of-fn} holds is already verified in \cref{sec:fn-func-of-mom-vars}.

Next, in this case
\begin{align*}
  &F_r^{(n)}(\btheta) = - \partial_r \mathcal{K} \prn[\big]{- (1 - \rho_1 \rho_2^n) \nabla \loss(\btheta)} + \lambda \theta_r,\\
  &\frac{\partial F_r^{(n)}}{\partial \theta_i^{(n - k)}}(\btheta) = \sum_{j = 1}^d \partial_{j r} \mathcal{K} \prn[\big]{- (1 - \rho_1 \rho_2^n) \nabla \loss(\btheta)} (1 - \rho_2) \rho_1 \rho_2^{k - 1} \partial_{i j} \mathcal{L}(\btheta).
\end{align*}

Therefore,
\begin{align*}
  \corrsc{r}{n}(\btheta) &= \lr \sum_{k = 1}^n \sum_{i = 1}^d \frac{\partial F_r^{(n)}}{\partial \theta_i^{(n - k)}}(\btheta) \sum_{s = n - k}^{n - 1} F_i^{(s)}(\btheta)\\
                         &= \lr \rho_1 (1 - \rho_2) \sum_{i = 1}^d \sum_{j = 1}^d \partial_{j r} \mathcal{K} \prn[\big]{- (1 - \rho_1 \rho_2^n) \nabla \loss(\btheta)} \partial_{i j} \mathcal{L}(\btheta)\\
  &\quad \times \sum_{k = 1}^n \rho_2^{k - 1} \sum_{s = n - k}^{n - 1} \prn[\big]{- \partial_i \mathcal{K} \prn[\big]{- (1 - \rho_1 \rho_2^s) \nabla \loss(\btheta)} + \lambda \theta_i}
\end{align*}

Note that
\begin{equation*}
\partial_i \mathcal{K}(- \nabla \loss(\btheta)) - \partial_i \mathcal{K} \prn[\big]{- (1 - \rho_1 \rho_2^s) \nabla \loss(\btheta)} = - \brk[\big]{\nabla^2 \mathcal{K}(\bzeta) \nabla \loss(\btheta)}_i \rho_1 \rho_2^s,
\end{equation*}
where $\bzeta$ is on the segment between $- (1 - \rho_1 \rho_2^s) \nabla \loss(\btheta)$ and $- \nabla \loss(\btheta)$.
Applying \cref{lem:decaying-sums} with $a_k = \rho_2^{k - 1}$ and $b_s = \rho_1 \rho_2^s$ we see that
\begin{align*}
  &\sum_{k = 1}^n \rho_2^{k - 1} \sum_{s = n - k}^{n - 1} \prn[\big]{- \partial_i \mathcal{K} \prn[\big]{- (1 - \rho_1 \rho_2^s) \nabla \loss(\btheta)} + \lambda \theta_i}\\
  &\quad = \sum_{k = 1}^n \rho_2^{k - 1} k \prn[\big]{- \partial_i \mathcal{K} \prn[\big]{- \nabla \loss(\btheta)} + \lambda \theta_i} + o_n(1) = \frac{1}{(1 - \rho_2)^2} \prn[\big]{- \partial_i \mathcal{K} \prn[\big]{- \nabla \loss(\btheta)} + \lambda \theta_i} + o_n(1),
\end{align*}
where $o_n(1) \to 0$ as $n \to \infty$ uniformly in $\btheta \in \mathcal{D}$. Using the boundedness of derivatives again, we also have
\begin{align*}
  &\lr \rho_1 (1 - \rho_2) \sum_{i = 1}^d \sum_{j = 1}^d \partial_{j r} \mathcal{K} \prn[\big]{- (1 - \rho_1 \rho_2^n) \nabla \loss(\btheta)} \partial_{i j} \mathcal{L}(\btheta)\\
  &\quad = \lr \rho_1 (1 - \rho_2) \sum_{i = 1}^d \sum_{j = 1}^d \partial_{j r} \mathcal{K} \prn[\big]{- \nabla \loss(\btheta)} \partial_{i j} \mathcal{L}(\btheta) + o_n(1)
\end{align*}
and
\begin{equation*}
\corrsc{r}{n}(\btheta) = \lr \frac{\rho_1}{1 - \rho_2} \sum_{i = 1}^d \sum_{j = 1}^d \partial_{j r} \mathcal{K} \prn[\big]{- \nabla \loss(\btheta)} \partial_{i j} \mathcal{L}(\btheta) \prn[\big]{- \partial_i \mathcal{K} \prn[\big]{- \nabla \loss(\btheta)} + \lambda \theta_i} + o_n(1)
\end{equation*}
as desired.
\end{proof}

\section{Empirical Illustrations}\label{sec:empirical-evaluations}

As a sanity check, we verify that the memoryless iteration \eqref{eq:memoryless-iteration} that we find is closer than the first-order approximation (the one with no correction from memory: for example, the first-order approximation of Adam is sign gradient descent \citep{ma2022qualitative}). We see in \cref{fig:closeness_of_trajectories} that the $\ell_{\infty}$ (maximal) norm of the weight difference is smaller with the correction term. Note that the learning rates are realistic and weight decay is present.

\py{fig_closeness_of_trajectories()}
\begin{figure}[htb!]
\centering
\includegraphics[width=0.480\linewidth]{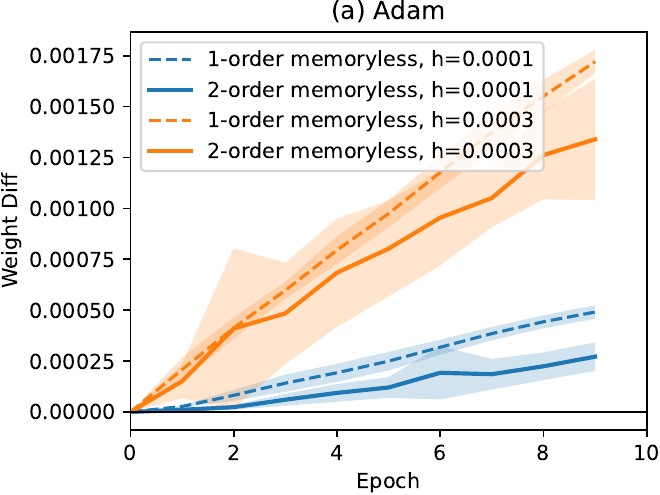}
\hfill
\includegraphics[width=0.480\linewidth]{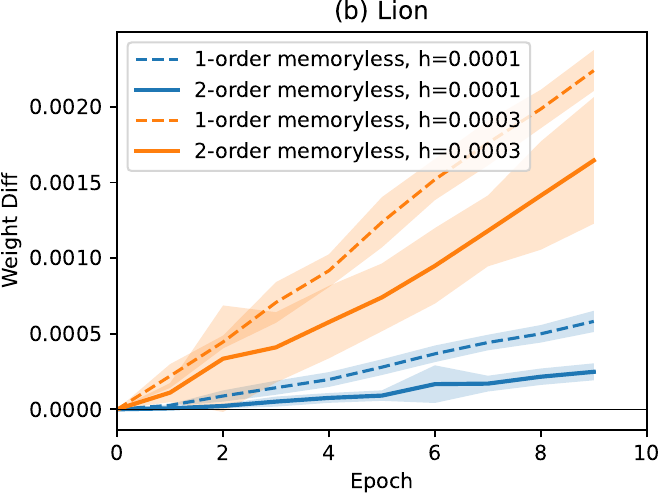}
\caption{Comparison of the trajectories: $\ell_\infty$-norm of weight difference between the second-order memoryless method from \cref{th:general-momentum-methods} and the first-order approximation (signGD): \textbf{(a)} Adam, \textbf{(b)} Lion (perturbed by $\varepsilon$ and with bias correction). MLP with GELU activation on MNIST-10K, learning rates $\lr \in \crl{{\color{mplblue} 10^{-4}}, {\color{mplorange} 3 \times 10^{-4}}}$, weight decay $10^{-3} / \lr$, $\varepsilon = 10^{-6}$.}
\label{fig:closeness_of_trajectories}
\end{figure}

\py{adam_vs_lion_vision_and_lang_fig()}
\begin{figure}[htb!]
\centering
\includegraphics[width=0.480\linewidth]{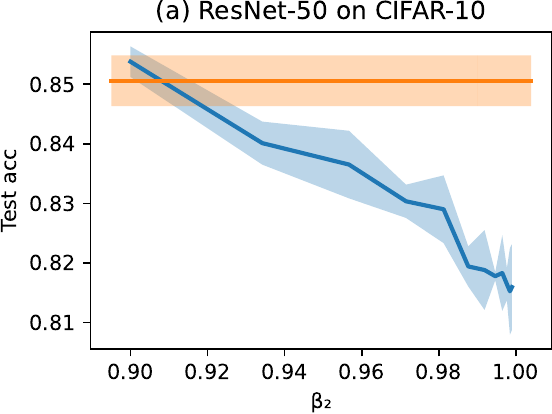}
\hfill
\includegraphics[width=0.480\linewidth]{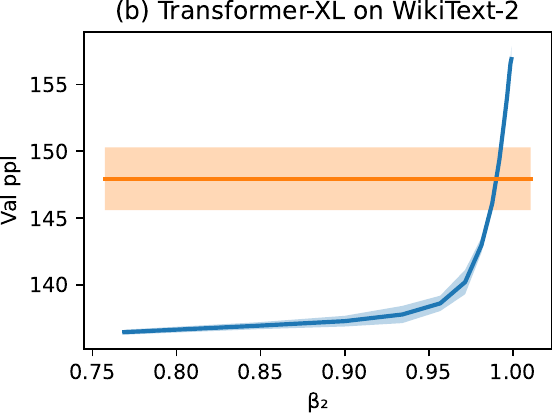}
\caption{\textbf{(a)} ResNet-50 on CIFAR-10: test accuracy at training loss threshold 0.05.
Full-batch {\color{mplblue} Adam}, learning rate $\lr = 10^{{-3.5}}$,
$\beta_1 = 0.99$, $\varepsilon = 10^{-6}$, weight decay $0.005$.
For comparison, we also show {\color{mplorange} Lion} with the same learning rate and weight decay (with default $\rho_1 = 0.9$, $\rho_2 = 0.99$).
\textbf{(b)} Minimal validation perplexity (before overfitting) of Transformer-XL
trained with full-batch {\color{mplblue} Adam} on WikiText-2 with learning rate
$10^{-4}$, $\beta_1 = 0.9$,
$\varepsilon = 10^{-6}$.
For comparison, we also show {\color{mplorange} Lion} (with default $\rho_1 = 0.9$, $\rho_2 = 0.99$).
All results are averaged over three iterations.}
\label{fig:adam_vs_lion_vision_and_lang_fig}
\end{figure}

As a preliminary empirical illustration, we train ResNet-50 \citep{he2016deep} on CIFAR-10 \citep{krizhevsky2009learning} using Adam (with decoupled weight decay) and Lion. We keep the $\beta_1$ parameter of Adam at $0.99$ (for stable training on CIFAR-10 \citep{ma2022qualitative,pmlr-v235-cattaneo24a}) and sweep the $\beta_2$ parameter. We plot in \cref{fig:adam_vs_lion_vision_and_lang_fig}(a) the test accuracy at a fixed small training loss threshold (controlling for training speed). As predicted in \cref{sec:adamw-anti-reg-but-lion-not} and confirming the observation from \citep{pmlr-v235-cattaneo24a} for pure Adam without weight decay, the test accuracy drops as $\beta_2$ approaches one. We see that Adam with lower values of $\beta_2$ can sometimes outperform Lion with default hyperparameters and thus close the generalization gap between these two algorithms, consistent with the theory. We also observe this phenomenon on a language task by training Transformer-XL \citep{Dai2019TransformerXLAL} on WikiText-2 \citep{merity2017pointer}. We fix the default $\beta_1 = 0.9$ for Adam and sweep $\beta_2$, plotting the minimal validation perplexity achieved before overfitting; as in the vision task, we compare with Lion whose hyperparameters are set at default values.
We observe in \cref{fig:adam_vs_lion_vision_and_lang_fig}(b) the same trends as above (in large-batch training, higher $\beta_2$ increases the best validation perplexity, that is, hurts generalization; sometimes, taking lower $\beta_2$ can close the gap between Adam and Lion).

We provide some additional sweeps with different learning rates in \cref{fig:adam_vs_lion_vision_additional_plots,fig:adam_vs_lion_lang_additional_plots}.

The code is available at \url{https://github.com/borshigida/how-memory-modifies-loss}.

\py{adam_vs_lion_vision_additional_plots()}
\begin{figure}[htb!]
\centering
\includegraphics[width=0.480\linewidth]{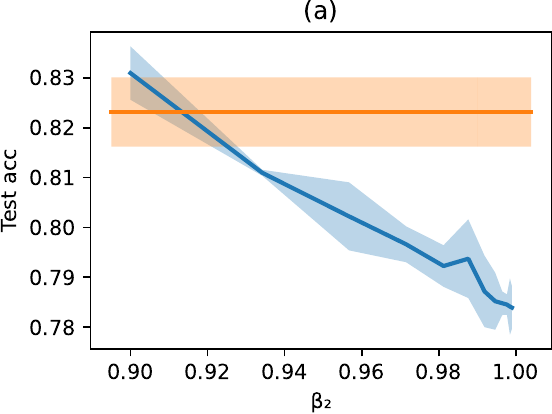}
\hfill
\includegraphics[width=0.480\linewidth]{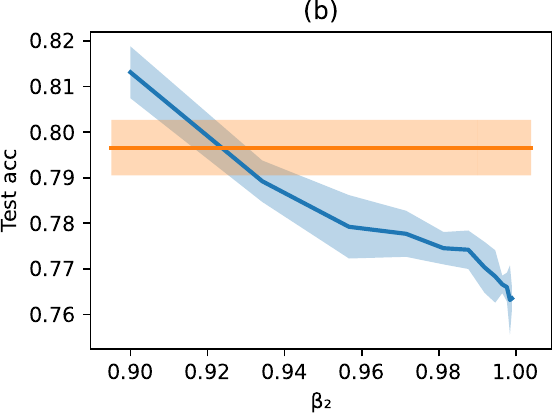}
\caption{ResNet-50 on CIFAR-10: test accuracy at training loss threshold 0.05.
Full-batch {\color{mplblue} Adam}, learning rates \textbf{(a)} $h = 10^{{-4}}$; \textbf{(b)} $h = 5 \times 10^{{-5}}$,
$\beta_1 = 0.99$, $\varepsilon = 10^{-6}$, weight decay $5 \times 10^{-6.5} / h$.
For comparison, we also show {\color{mplorange} Lion} with the same learning rates and weight decay (with default $\rho_1 = 0.9$, $\rho_2 = 0.99$).
All results are averaged over three iterations.}
\label{fig:adam_vs_lion_vision_additional_plots}
\end{figure}

\py{adam_vs_lion_lang_additional_plots()}
\begin{figure}[htb!]
\centering
\includegraphics[width=0.480\linewidth]{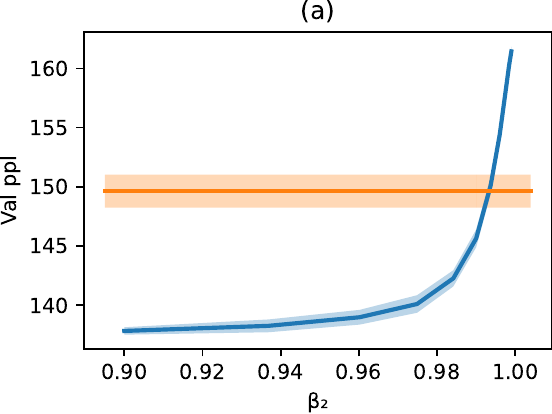}
\hfill
\includegraphics[width=0.480\linewidth]{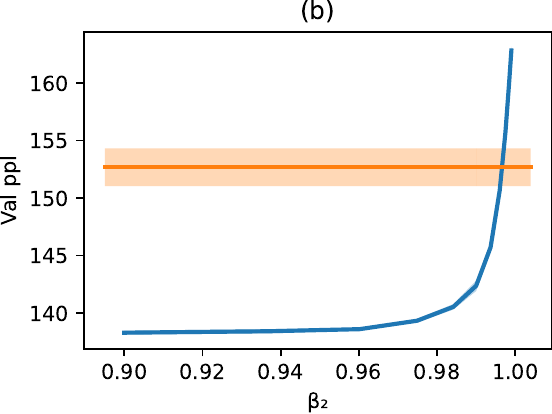}
\caption{Minimal validation perplexity (before overfitting) of Transformer-XL
trained with full-batch {\color{mplblue} Adam} on WikiText-2 with learning rates
\textbf{(a)} $h = 5 \times 10^{-5}$; \textbf{(b)} $h = 2.5 \times 10^{-5}$, weight decay $10^{-8} / h$, $\beta_1 = 0.9$,
$\varepsilon = 10^{-6}$.
For comparison, we also show {\color{mplorange} Lion} with the same learning rates and weight decay (with default $\rho_1 = 0.9$, $\rho_2 = 0.99$).
All results are averaged over three iterations.}
\label{fig:adam_vs_lion_lang_additional_plots}
\end{figure}

\subsection{A Note on the Edge of Stability and Comparisons with Lion on Vision Tasks}

\citet{cohen2022adaptive} notice that in a sense Adam trains at the edge of stability. They view Adam as momentum gradient descent with evolving preconditioner
\begin{equation*}
\mathbf{P}_{t + 1} = (1 - \beta_1^{t + 1}) \brk[\bigg]{\diag\prn[\bigg]{\sqrt{\frac{\boldsymbol{\nu}_{t + 1}}{1 - \beta_2^{t + 1}}}} + \epsilon \mathbf{I}}.
\end{equation*}
They define ``preconditioned sharpness'' to be the top eigenvalue of the preconditioned Hessian $\lambda_1(\mathbf{P}_t^{-1} \mathbf{H}_t)$, where $\mathbf{H}_t$ is the Hessian of the loss, and observe that this quantity often oscillates around the stability threshold $\frac{2 + 2 \beta_1}{(1 - \beta_1) \eta}$, where $\eta$ is the learning rate. (This fraction comes from the fact that if the preconditioner were constant, Adam would become a form of preconditioned gradient descent with EMA-style momentum, and this is the ordinary stability threshold of EMA-style heavy-ball momentum on the quadratic Taylor approximation of the loss; we refer to \citet{cohen2022adaptive} for details.) They use large-batch training on CIFAR-10/100. We train a CNN on CIFAR-10 as well, and reproduce this result in \cref{fig:cnn_cifar10_adam_sharpness_09}. We also plot ordinary sharpness $\lambda_1(\mathbf{H}_t)$ (top hessian eigenvalue), which first increases and then decreases. Recall that this is an extremely unstable regime of training \citep{ma2022qualitative}.


\py{fig_cnn_cifar10_adam_sharpness(0.9)}
\begin{figure}[htb!]
\centering
\includegraphics[width=0.960\linewidth]{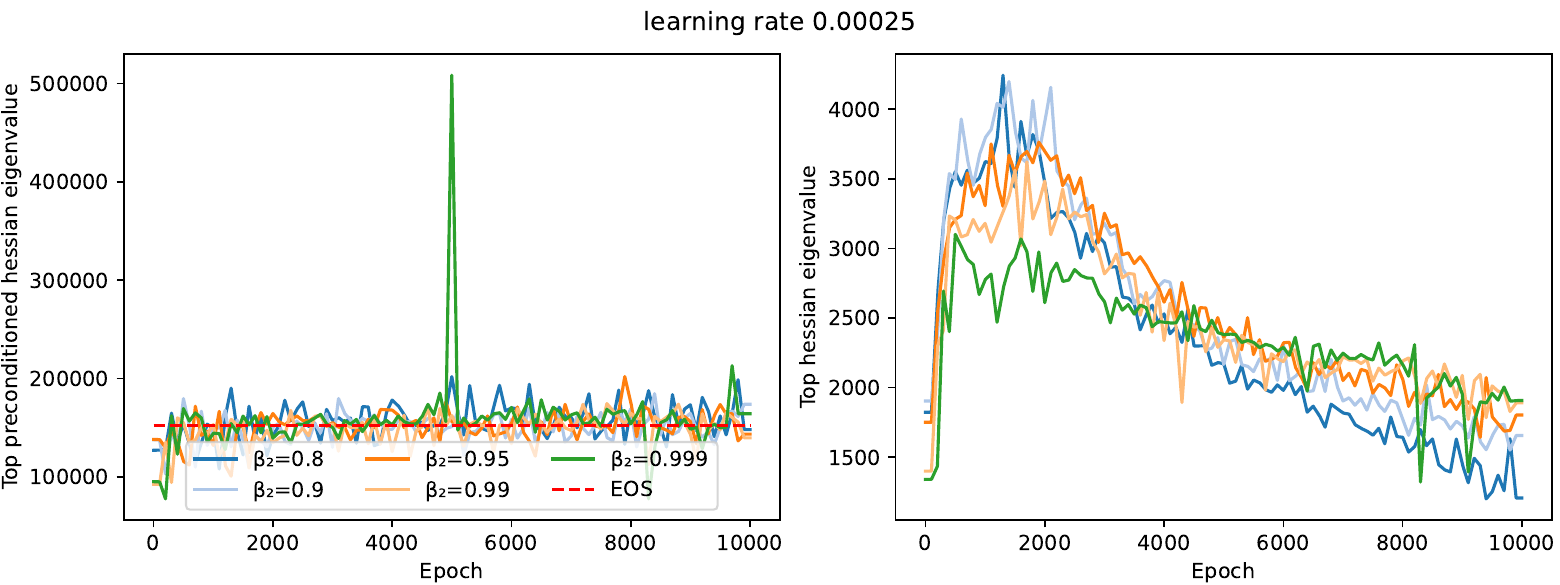}
\bigskip
\centering
\includegraphics[width=0.960\linewidth]{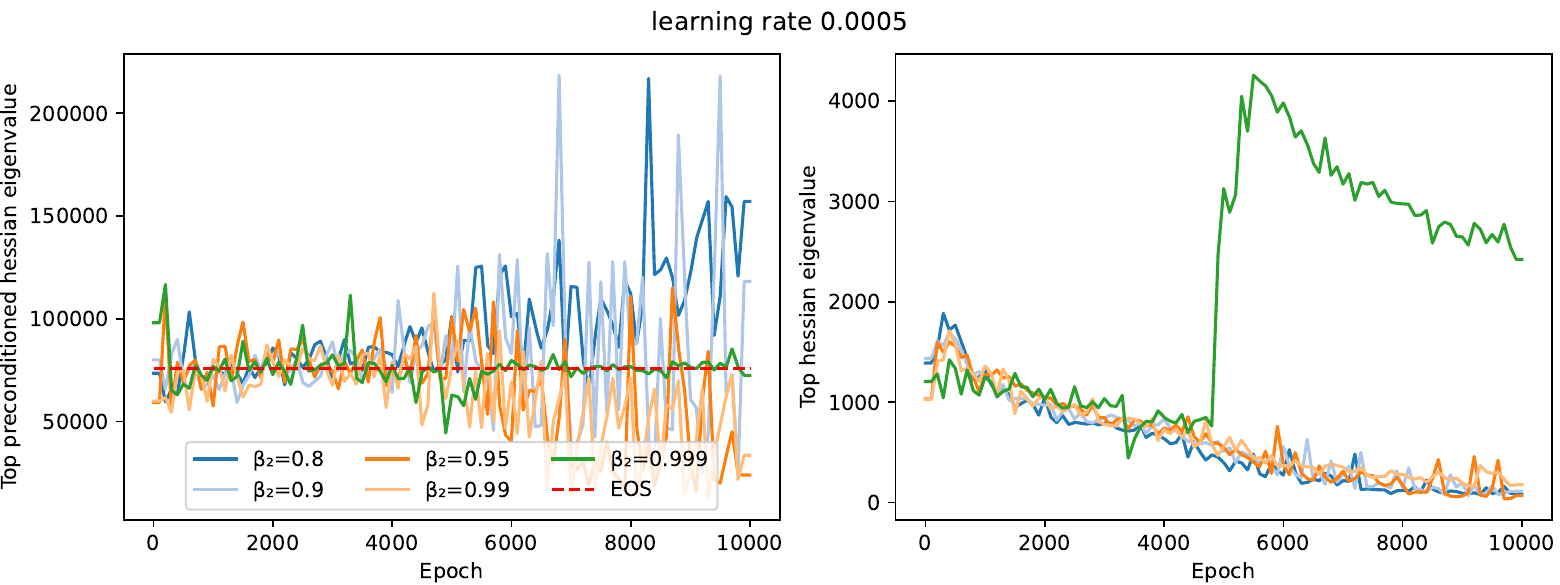}
\bigskip
\centering
\includegraphics[width=0.960\linewidth]{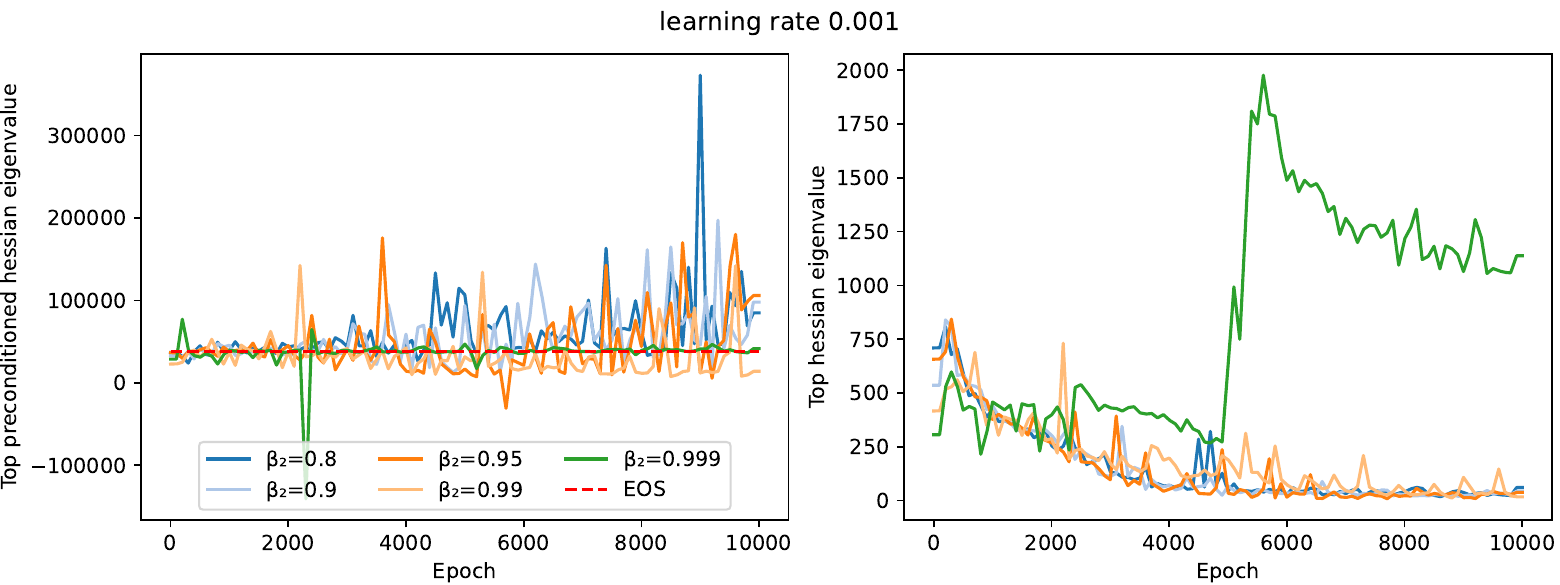}
\caption{CNN trained on CIFAR-10 with full-batch Adam, $\beta_1 = 0.9$, $\epsilon = 10^{-6}$, learning rate $0.001$. Left: preconditioned sharpness $\lambda_1(\mathbf{P}_t^{-1} \mathbf{H}_t)$ oscillates around the stability threshold. Right: the plots of ordinary sharpness $\lambda_1(\mathbf{H}_t)$.}
\label{fig:cnn_cifar10_adam_sharpness_09}
\end{figure}

Note that the parameter controlling the exponential forgetting of gradients $\beta_1$ corresponds to the $\rho_2$ parameter of Lion, so the default $\rho_2 = 0.99$ in Lion would match $\beta_1 = 0.99$ rather than $\beta_1 = 0.9$ in Adam.
If we take $\beta_1 = 0.99$ which is the ``smooth'' regime of training, preconditioned sharpness does not reach the stability threshold (\cref{fig:cnn_cifar10_adam_sharpness_099}). Note also that ordinary sharpness $\lambda_1(\mathbf{H}_t)$ (top hessian eigenvalue) is much lower for small $\beta_2$ (especially noticeable for $\beta_1 \leq 0.9$). This suggests that in large-batch training on vision tasks, taking $\beta_2 < \beta_1 = 0.99$ strongly regularizes training, moving the model parameters to flatter regions of the loss space. It is a promising direction to investigate the limits of such regularization: for example, taking $\beta_2$ near the threshold of divergence may regularize training so much that the default Lion will not be able to match in terms of generalization error.

\py{fig_cnn_cifar10_adam_sharpness(0.99)}
\begin{figure}[htb!]
\centering
\includegraphics[width=0.960\linewidth]{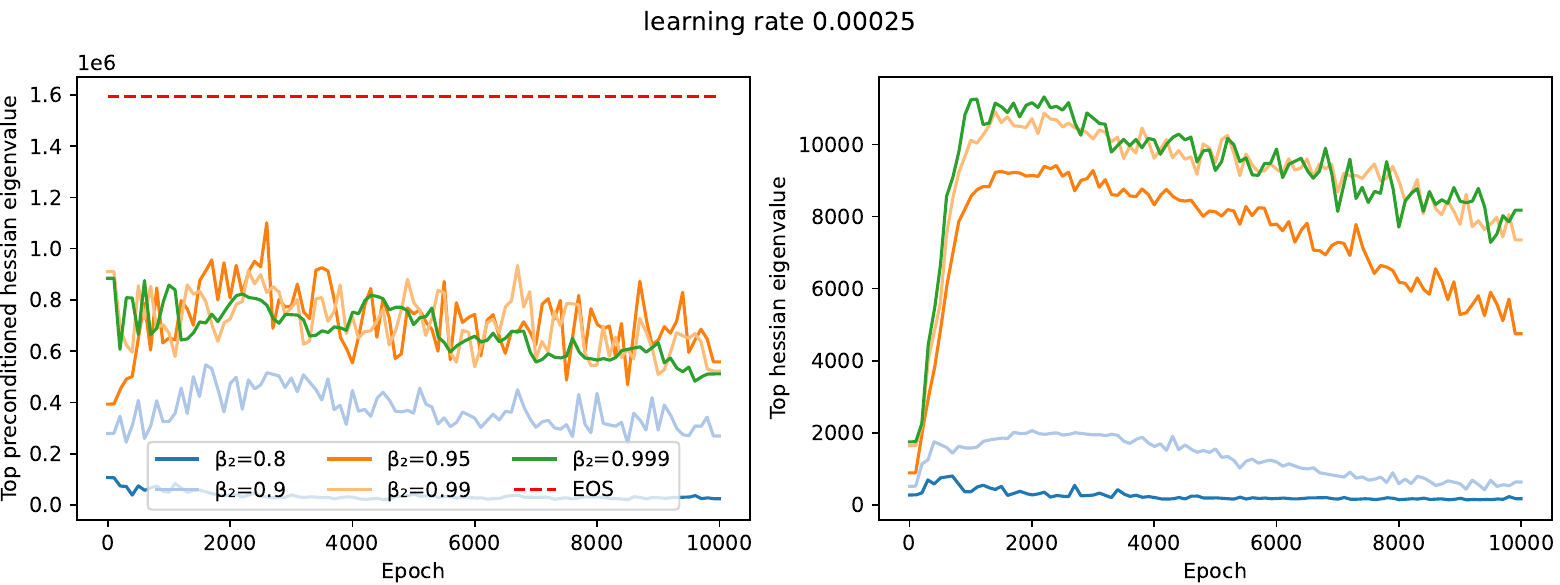}
\bigskip
\centering
\includegraphics[width=0.960\linewidth]{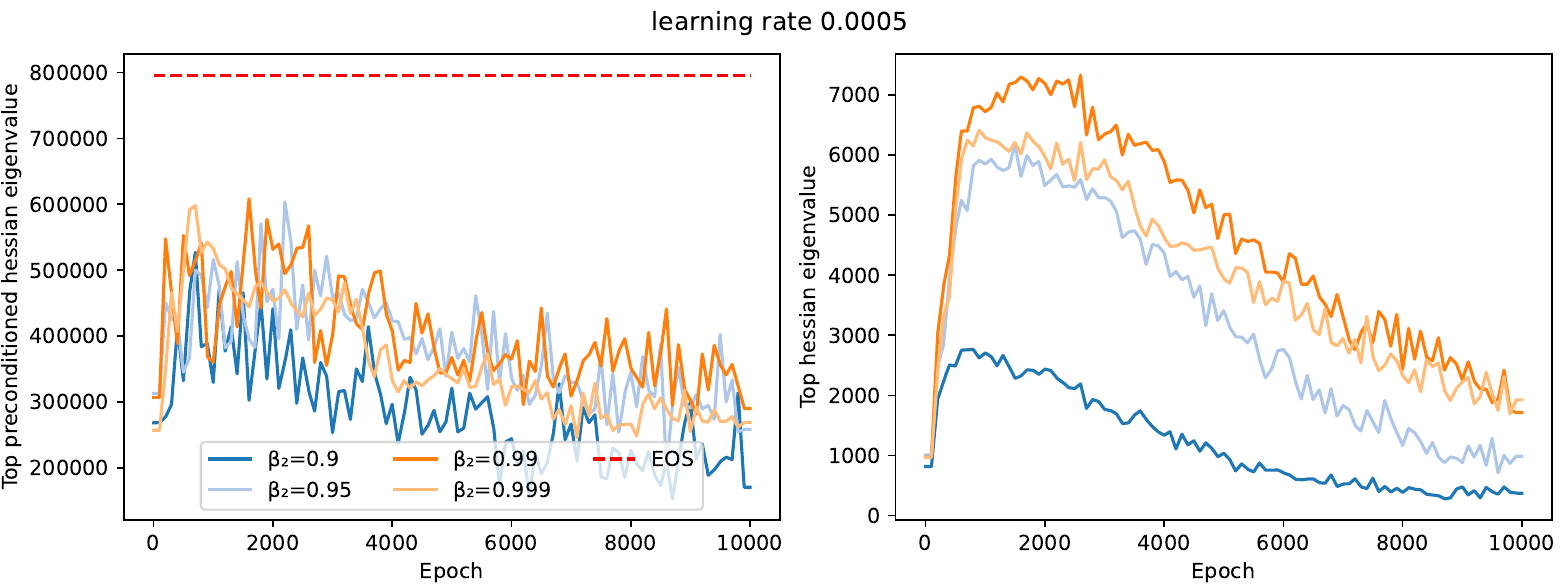}
\bigskip
\centering
\includegraphics[width=0.960\linewidth]{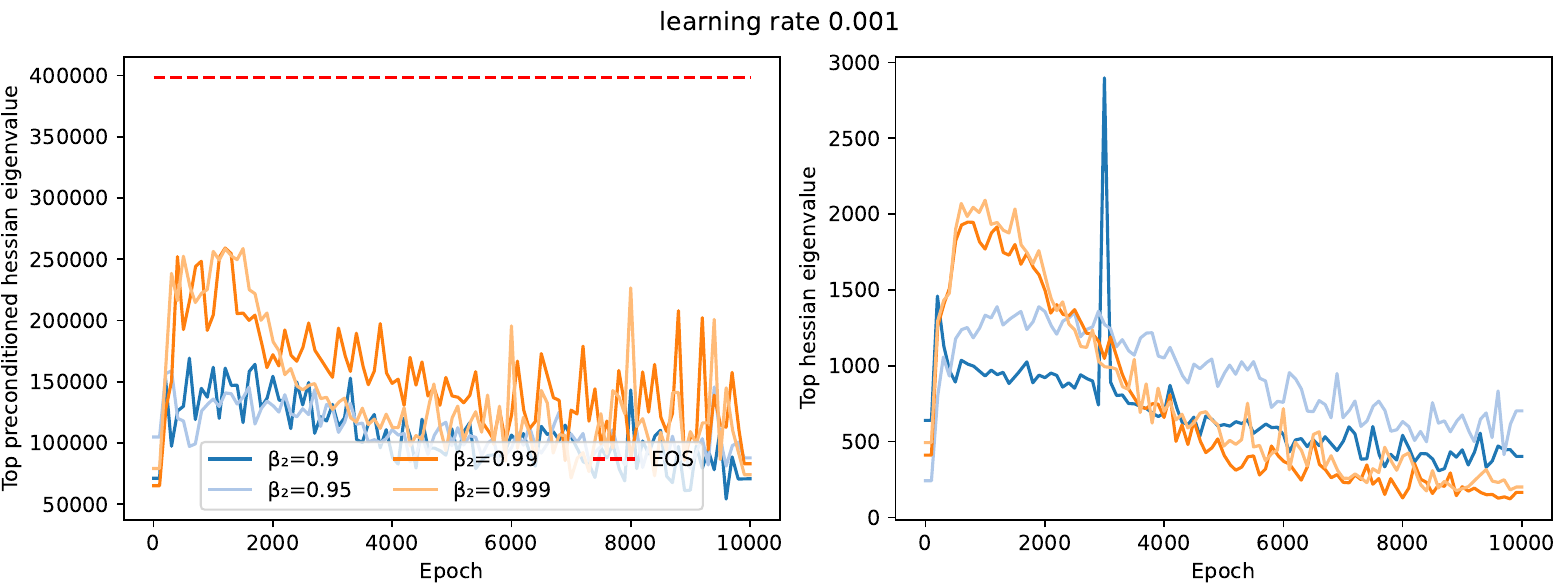}
\caption{CNN trained on CIFAR-10 with full-batch Adam, $\beta_1 = 0.99$, $\epsilon = 10^{-6}$, learning rate $0.001$. Left: preconditioned sharpness $\lambda_1(\mathbf{P}_t^{-1} \mathbf{H}_t)$ does not reach the stability threshold. Right: the plots of ordinary sharpness $\lambda_1(\mathbf{H}_t)$.}
\label{fig:cnn_cifar10_adam_sharpness_099}
\end{figure}

\section{Experiment Details and Licenses}\label{sec:exp-details-and-licenses}
Our implementation of ResNet-50 follows the one from \cite{pmlr-v235-cattaneo24a} (small modification of the standard \Verb|torchvision| implementation to allow for training on CIFAR-10 rather than ImageNet). The \Verb|torchvision| repository has the BSD 3-Clause license. CIFAR-10 is released without an explicit license. MNIST has the CC BY-SA 3.0 license.

Our implementation of training Transformer-XL on WikiText-2 follows the one from
\cite{kunstner2023noisenotmain}
which is a small modification of the codebase\footnote{\url{https://github.com/kimiyoung/transformer-xl}} for \cite{Dai2019TransformerXLAL}, licensed under the Apache-2.0 License. The WikiText-2 dataset is released under the CC BY-SA 3.0 license.

For Adam (with decoupled weight decay), we use the standard implementation from \Verb|pytorch.optim|; Lion is taken from the \Verb|google/automl| repository\footnote{\url{https://github.com/google/automl/tree/master/lion}}. This repository is licensed under the Apache License 2.0.
The implementations of the optimizers used for comparing the trajectories in Figure~1 of the paper are custom and match exactly our analytical formulas (in particular, Lion has bias correction and the soft-sign function $x \mapsto x / \sqrt{x^2 + \varepsilon}$ instead of the sign function), given below.

\paragraph{AdamW: memoryless update} The (full-batch) memoryless AdamW approximation is
\begin{equation*}
  \theta^{(n + 1)}_j = \theta^{(n)}_j - \lr F^{(n)}_j(\boldsymbol{\theta}^{(n)}) - \lr M^{(n)}_j(\boldsymbol{\theta}^{(n)}),\\
\end{equation*}
where
\begin{align*}
  &F^{(n)}_j(\boldsymbol{\theta}) = \frac{\nabla_j \mathcal{L}(\boldsymbol{\theta})}{\sqrt{|\nabla_j \mathcal{L}(\boldsymbol{\theta})|^2 + \varepsilon}} + \lambda \theta_j,\\
  &M^{(n)}_j(\boldsymbol{\theta}) = - h \biggl( \frac{\beta_2}{1 - \beta_2} - \frac{(n + 1) \beta_2^{n + 1}}{1 - \beta_2^{n + 1}} \biggr) \frac{|\nabla_j \mathcal{L}(\boldsymbol{\theta})|^2 \prn[\big]{\nabla_j \| \nabla \mathcal{L}(\boldsymbol{\theta}) \|_{1, \varepsilon} + \lambda \brk{\nabla^2 \mathcal{L}(\btheta) \btheta}_j}}{(|\nabla_j \mathcal{L}(\boldsymbol{\theta})|^2 + \varepsilon)^{3 / 2}}\\
  &\phantom{M^{(n)}_j(\boldsymbol{\theta}) =} + h \biggl( \frac{\beta_1}{1 - \beta_1} - \frac{(n + 1) \beta_1^{n + 1}}{1 - \beta_1^{n + 1}} \biggr) \frac{\prn[\big]{\nabla_j \| \nabla \mathcal{L}(\boldsymbol{\theta}) \|_{1, \varepsilon} + \lambda \brk{\nabla^2 \mathcal{L}(\btheta) \btheta}_j}}{\sqrt{|\nabla_j \mathcal{L}(\boldsymbol{\theta})|^2 + \varepsilon}}.
\end{align*}

\paragraph{Lion-$\mathcal{K}$ with bias correction} The Lion-$\mathcal{K}$ algorithm with bias correction is defined as in Example 1.5 of the paper except bias correction is added:
\begin{align*}
  \boldsymbol{F}^{(n)}(\btheta^{(n)}, \ldots, \btheta^{(0)}) &= - \nabla \mathcal{K}(\bm_1^{(n + 1)} + \bm_2^{(n + 1)}) + \bm_3^{(n + 1)},\\
  \text{where}\quad \bm_1^{(n + 1)} &= - \frac{1 - \rho_2}{1 - \rho_2^{n + 1}} \frac{\rho_1}{\rho_2} \sum_{k = 0}^n \rho_2^{n - k} \nabla \mathcal{L}(\btheta^{(k)}),\\
  \bm_2^{(n + 1)} &= - \prn[\bigg]{1 - \frac{\rho_1}{\rho_2}} \nabla \mathcal{L}(\btheta^{(n)}),\\
  \bm_3^{(n + 1)} &= \lambda \btheta^{(n)}.
\end{align*}

\paragraph{Lion (perturbed by $\varepsilon$): memoryless update}
The memoryless iteration is given by
\begin{equation*}
  \theta^{(n + 1)}_j = \theta^{(n)}_j - \lr F^{(n)}_j(\boldsymbol{\theta}^{(n)}) - \lr M^{(n)}_j(\boldsymbol{\theta}^{(n)}),\\
\end{equation*}
where
\begin{align*}
  F^{(n)}_j(\boldsymbol{\theta}) &= \frac{\nabla_j \mathcal{L}(\boldsymbol{\theta})}{\sqrt{|\nabla_j \mathcal{L}(\boldsymbol{\theta})|^2 + \varepsilon}} + \lambda \theta_j,\\
  M^{(n)}_j(\boldsymbol{\theta}) &= \lr \brk[\bigg]{\frac{\rho_1}{1 - \rho_2} - \frac{(n + 1) \rho_2^n \rho_1}{1 - \rho_2^{n + 1}}}\\
  &\quad \times \frac{\varepsilon}{\prn[\big]{\abs{\nabla_j \loss(\btheta)}^2 + \varepsilon}^{3 / 2}}  \nabla_j \brk[\big]{\norm{\nabla \loss(\btheta)}_{1, \varepsilon} + \lambda (\nabla \loss(\btheta)\trans \btheta - \loss(\btheta))}.
\end{align*}

\subsection{Compute Resources}

One sweep of hyperparameter $\beta_2$ contained about 12 runs, with each run repeated for 3 iterations. Each run took about 10 hours on average on one machine with a devoted 40\,GB NVIDIA A100 GPU (though the training horizon was longer than necessary). This puts compute resources at around $12 \times 10 \times 3 = 360$ A100-GPU-hours per sweep. In \cref{fig:adam_vs_lion_vision_and_lang_fig}, two sweeps were conducted. The experiments on truncated MNIST conducted to produce \cref{fig:closeness_of_trajectories} used negligible resources compared to the sweeps described (less than 1 GPU-hour).
Additional compute was used for preliminary experimentation.

\end{document}